\def\eqref#1{equation~\ref{#1}}
\def\1{\bm{1}}
\def\vtheta{{\bm{\theta}}}
\def\vb{{\bm{b}}}
\def\vg{{\bm{g}}}
\def\vm{{\bm{m}}}
\def\vs{{\bm{s}}}
\def\vv{{\bm{v}}}
\def\vx{{\bm{x}}}
\def\vy{{\bm{y}}}
\def\mA{{\bm{A}}}
\def\mB{{\bm{B}}}
\def\mD{{\bm{D}}}
\def\mF{{\bm{F}}}
\def\mG{{\bm{G}}}
\def\mH{{\bm{H}}}
\def\mI{{\bm{I}}}
\def\mP{{\bm{P}}}
\def\mQ{{\bm{Q}}}
\def\mU{{\bm{U}}}
\def\mSigma{{\bm{\Sigma}}}
\DeclareMathAlphabet{\mathsfit}{\encodingdefault}{\sfdefault}{m}{sl}
\SetMathAlphabet{\mathsfit}{bold}{\encodingdefault}{\sfdefault}{bx}{n}
\title{Learning to Optimize \rev{Quasi-Newton Methods}
}
\author{\name Isaac C. Liao \email iliao@mit.edu \\
      \addr Research Lab for Electronics\\
      MIT
      \AND
      \name Rumen R. Dangovski \email rumenrd@mit.edu \\
      \addr Research Lab for Electronics\\
      MIT
      \AND
      \name Jakob N. Foerster \email jfoerster@cs.toronto.edu \\
      \addr Department of Engineering Science\\
      University of Oxford
      \AND
      \name Marin Solja\v{c}i\'{c} \email soljacic@mit.edu \\
      \addr Research Lab for Electronics\\
      MIT}
\newcommand{\rev}[1]{#1}
\theoremstyle{plain}
\newtheorem{theorem}{Theorem}[section]
\theoremstyle{definition}
\newtheorem{definition}[theorem]{Definition}
\theoremstyle{remark}
\begin{document}

\maketitle

\begin{abstract}
Fast gradient-based optimization algorithms have become increasingly essential for the computationally efficient training of machine learning models. One technique is to multiply the gradient by a preconditioner matrix to produce a step, but it is unclear what the best preconditioner matrix is. This paper introduces a novel machine learning optimizer called LODO, which tries to online meta-learn the best preconditioner during optimization. Specifically, our optimizer merges Learning to Optimize (L2O) techniques with quasi-Newton methods to learn preconditioners parameterized as neural networks; they are more flexible than preconditioners in other quasi-Newton methods. Unlike other L2O methods, LODO does not require any meta-training on a training task distribution, and instead learns to optimize \textit{on the fly} while optimizing on the test task, adapting to the local characteristics of the loss landscape while traversing it. Theoretically, we show that our optimizer approximates the inverse Hessian in noisy loss landscapes and is capable of representing a wide range of inverse Hessians. We experimentally verify that our algorithm can optimize in noisy settings, and show that simpler alternatives for representing the inverse Hessians worsen performance. Lastly, we use our optimizer to train a semi-realistic deep neural network with 95k parameters at speeds comparable to those of standard neural network optimizers.
\end{abstract}

\section{Introduction} \label{sec:introduction}

\par Many optimization algorithms like stochastic gradient descent (SGD) \citep{rosenblatt1958perceptron} and Adam~\citep{kingma2014adam} have been widespread and successful in the rapid training of deep neural networks \citep{https://doi.org/10.48550/arxiv.1906.06821}. Fundamentally, this is a problem of minimizing a loss which is a function of a large vector containing the weights of the network. The time it takes to optimize a neural network is a bottleneck in machine learning. The more quickly a network can be trained, the more computational resources are saved, and therefore researchers have devoted great effort into creating new and faster optimizers. \citep{MAL-058, https://doi.org/10.48550/arxiv.2009.11243, https://doi.org/10.48550/arxiv.2002.03432, https://doi.org/10.48550/arxiv.1503.05671, sun2019survey}


\par We present a novel algorithm drawing from the field of \textit{learning to optimize} (L2O) spearheaded by \citep{li2016learning} and \citep{andrychowicz2016learning}. Namely, we use a meta-optimizer to online learn an optimization strategy which reaches lower losses with fewer steps. Our learned optimization strategy takes the form of an online learned neural network representation of the local inverse Hessian of the loss which is meanwhile being used as a gradient preconditioner for quasi-Newton optimization. Unlike in the Newton method, matrix-vector multiplication by the inverse Hessian estimate is cheap for our neural representation, allowing our optimizer to train much larger neural networks than the Newton method can. Unlike other L2O algorithms which learn to optimize \textit{before} optimization \citep{https://doi.org/10.48550/arxiv.2103.12828}, our algorithm ``learns to optimize \textit{during} optimization'' (LODO for short) without any L2O meta training time on a curated training task distribution, and is instead applied directly and immediately to the desired optimization problem with no preparation. This way, LODO learns during training how to exploit any local curvatures of the loss landscape within each use case. Our work on LODO primarily aims to foster further study on the blending of quasi-Newton and L2O methodologies, specifically by combining the step efficiency of the Newton method with the gradient-based learning capabilities of neural representations of matrices. We present theoretical work to show that this blending gives rise to desirable optimization behavior in nonlinear and stochastic problems, and experiments to support this theory. We target the problem of training model architectures with lots of parameter sharing, such as convolutional networks, where the time taken for inference within the training loop drastically outweighs the time taken by the optimizer.

\paragraph{Claims and Evidence.} 
We show theoretically and experimentally that a simplified version of LODO correctly learns the inverse Hessian in a stochastic convex setting. Next, we show theoretically that LODO's inverse Hessian representation is highly expressive, and experimentally that simpler, less expressive alternatives perform worse. Finally, we demonstrate the use of LODO in an autoregressive image generation task \rev{and in image classification}. This paper serves as a stepping stone in the development of meta-training-free online L2O.

\par The remainder of this paper is structured as follows. Section \ref{sec:related_work} discusses relevant background and contributions in optimization and L2O. Section \ref{sec:method} shows how LODO works. Section \ref{sec:theory} provides theoretical justification for our design of LODO. Section \ref{sec:experiments} shows experiments which explore what makes LODO work and why. Section \ref{sec:discussion} discusses our findings and Section \ref{sec:conclusion} summarizes them.

\section{Related Work} \label{sec:related_work}

\par Research into the construction of faster optimizers has mostly fallen under two branches of work. The older branch attempts to endow SGD with adaptive capabilities, often through modifications involving calculations of means and variances of the gradient using exponential moving averages (EMAs). These values are then combined to create a preconditioner matrix which is then multiplied by the gradient to choose a step. RMSprop \citep{rmsprop} and Adam use the mean to induce momentum with a variance-based diagonal preconditioner to normalize the step size. LARS \citep{https://doi.org/10.48550/arxiv.1708.03888} modifies the preconditioner calculation to normalize layer-wise, while Yogi \citep{10.5555/3327546.3327647} modifies the variance accumulation to control increases in effective learning rate in a slower manner.

\par Some methods such as the Newton method and natural gradient descent \citep{pmlr-v37-martens15,George2018FastAN} precondition the gradient with adaptive estimates of the inverse Hessian and the inverse Fisher information matrices, respectively. These methods converge quickly but can be vulnerable to gradient noise and/or impractical to implement due to the resources spent in calculating and/or inverting the high dimensional matrices involved. For example, a prominent natural gradient based optimizer is K-FAC \citep{martens2015optimizing}, which preconditions using a Kronecker-factorized inverse Fisher estimator, and requires numerous expensive matrix inversions. For the Newton method, many researchers have developed approximations of the algorithm---called quasi-Newton methods---which have reduced time and memory complexity, such as L-BFGS \citep{LBFGS} and variants better suited to the stochasticity and structure present in machine learning optimization problems~\citep{pmlr-v2-schraudolph07a,https://doi.org/10.48550/arxiv.2011.06505, goldfarb2020practical,park2019meta,yao2021adahessian}.

In a quasi-Newton method, an approximate solution $\vx_t \in \mathbb{R}^n$ is refined by $\vx_{t+1} = \vx_t - \tilde{\alpha}\mG_t\vg_t$ for some learning rate $\tilde{\alpha}>0$, where $\mG_t \approx (\nabla_{\vx_t}^2 f(\vx_t))^{-1} \in \mathbb{R}^{n \times n}$ is some approximation of the inverse Hessian and $\vg_t = \nabla_{\vx_t} f(\vx_t) \in \mathbb{R}^n$ is the gradient computed by backpropagation from the loss $f: \mathbb{R}^n \to \mathbb{R}$. $\mG_t$ is usually computed from a past history of $(\vx_{t-i}, \vg_{t-i})$ pairs. $\tilde{\alpha}=1$ produces the exact solution for quadratic $f$, so we assume that $\tilde{\alpha}=1$ from this point on.

\par The big difference which distinguishes LODO from other quasi-Newton methods is that its preconditioner is learned by a meta-optimizer through a method known as ``hypergradient descent'' \citep{https://doi.org/10.48550/arxiv.1703.04782}, rather than estimated using a hardcoded formula like the methods listed above. While past hypergradient methods use low-rank \citep{https://doi.org/10.48550/arxiv.1910.08461}, diagonal \citep{https://doi.org/10.48550/arxiv.2202.00145, https://doi.org/10.48550/arxiv.1703.04782}, or Kronecker-factorized \citep{https://doi.org/10.48550/arxiv.2203.00089,goldfarb2020practical} preconditioners, LODO uses an entirely different, neural network-based style of preconditioner, which is more expressive than low-rank and diagonal preconditioners while requiring lower time complexity than K-FAC.
LODO is distinguished from the above methods in two aspects: the novel preconditioner, and the meta-learning approach for training the preconditioner. Our theory is focused on these two aspects of LODO. We present evidence of the theory in controlled experiments, of which~\citep{https://doi.org/10.48550/arxiv.2202.00145, https://doi.org/10.48550/arxiv.1703.04782} are our ablations. We underscore that our objective is not to compare against all hypergradient methods but to focus on the \emph{theory} of LODO and its validation in controlled experiments.


\par More recently, a subfield of meta-learning known as learning to optimize (L2O) has shown that deep networks can themselves be trained offline to perform optimization, at a speed which exceeds that of popular traditional optimizers, in hopes of further accelerating training procedures for other deep neural networks. \citet{andrychowicz2016learning} (whose optimizer we will call ``L2LBGDBGD'') and \citet{li2016learning,li2017learning} were among the first to successfully train neural networks to transform gradients into high quality steps, by backpropagating through unrolled and truncated training loops to tune the optimizer. Since then, many other variations of this idea have successfully produced optimizers exceeding the speed of common optimizers for narrow ranges of machine learning models \citep{https://doi.org/10.48550/arxiv.1810.10180}, though theoretical analysis of these learned optimizers tends to be difficult and scarce. A major goal of L2O research is to learn a single optimizer which can generalize to be able to train a wide variety of machine learning models with speed. \citep{pmlr-v70-lv17a}

Two issues also prevent L2O optimizers from being rapidly developed experimentally. Firstly, a carefully chosen ``task distribution'' for optimization practice is required for the meta-training of the L2O optimizer, playing the role analogous to the ``dataset'' but for learning how to optimize rather than to classify or to predict. These tasks are difficult to curate because the issue of generalization error applies; we need the test task to be similar to the training task distribution. Secondly, meta-training is prohibitively costly in that it involves nested training loops, which run much slower than a single training loop like in traditional learning \citep{pmlr-v97-metz19a}. While other L2O methods are burdened by choice of task distribution and lengthy meta-training, LODO avoids these issues by learning to optimize online directly on the test task.

\section{How LODO Works} \label{sec:method}

Our algorithm approximates the inverse Hessian in a quasi-Newton method using a matrix $\mG_t = \mG(\vtheta_t) \in \mathbb{R}^{n \times n}$ parameterized by a vector $\vtheta_t$ of weights learned over time $t$, described later in this section. After every step $t \gets t+1$ using the formula
\begin{align}
\vx_{t+1} = \vx_t - \mG(\vtheta_t) \vg_t, \label{eq:quasi-newton}
\end{align}
the loss $\ell_{t+1}=f(\vx_{t+1})$ is computed. Then the new gradient $\nabla_{\vx_{t+1}}\ell_{t+1}$ in $\vx_{t+1}$ is computed through backpropagation as usual, but for LODO we continue backpropagation through Equation \ref{eq:quasi-newton} until we find the ``hypergradient'' $\nabla_{\vtheta_{t}}\ell_{t+1}$ in the optimizer weights $\vtheta_t$, which allows us to update the optimization strategy as well. In this manner, $\vtheta_t$ is trained such that the quasi-Newton method tries to minimize the loss upon taking a single step.

In the L2O interpretation of such an algorithm, this would be equivalent to unrolling the inner loop optimization for only one step, causing severe truncation bias; the optimizer learns to greedily optimize within too short of a time horizon, thus suffering in the long term \citep{pmlr-v97-metz19a}. As a countermeasure, we take inspiration from the Momentum modification to SGD, and replace $\vg_t$ by a gradient EMA $(1-\beta)\sum_{\tau=0}^\infty \beta^{t-\tau}\vg_\tau$ for use in Equation \ref{eq:quasi-newton}. This changes Equation \ref{eq:quasi-newton} to $\vx_{t+1} = \vx_t - (1-\beta)\mG(\vtheta_t)\sum_{\tau=0}^\infty \beta^{t-\tau}\vg_\tau$ instead, for some decay parameter $\beta$, producing our LODO algorithm, summarized in Algorithm \ref{alg:LODO}. While we do not theoretically analyze the effect of momentum accumulation on LODO, Section \ref{sec:ablations_emas} shows experimentally that momentum is beneficial in practice. We leave any theoretical analysis of momentum for future work.

\begin{algorithm}
\caption{Learning to Optimize During Optimization (LODO)} \label{alg:LODO}
\begin{algorithmic}
\Require $f: \mathbb{R}^n \to \mathbb{R}$: Function to minimize.
\Require $\vx_0 \in \mathbb{R}^n$: Initialization.
\Require $\alpha \in \mathbb{R}$: Meta-learning rate (default 0.001), 
\Require $\alpha_0 \in \mathbb{R}$: Initial learning rate (default 1.0), 
\Require $0 \leq \beta < 1$: Momentum (default 0.9), 
\State $t \gets 0$  \Comment{Start time}
\State $\vtheta_0 \gets \text{random initialization}$  \Comment{Initialization for $\mG$ neural network}
\State $\vm_0 \gets \mathbf{0}$  \Comment{Initialize momentum}
\While{not converged}
    \State $\vx_{t+1} \gets \vx_t - \mG(\vtheta_t)\vm_t$  \Comment{Pick a step using $\mG$ with Eqs. (\ref{eq:quasi-newton}) and (\ref{eq:hessian_representation})}
    \State $\ell_{t+1} \gets f(\vx_{t+1})$  \Comment{Compute loss after step}
    \State $\vtheta_{t+1} \gets \vtheta_t + \text{Adam}(\nabla_{\vtheta_t} \ell_{t+1})$ \Comment{Tune the $\mG$ model to pick better steps}
    \State $\vm_{t+1} \gets \beta \vm_t +  (1-\beta)\nabla_{\vx_{t+1}} \ell_{t+1}$  \Comment{Update momentum}
    \State $t \gets t + 1$  \Comment{Increment time}
\EndWhile \\
\textbf{return} $\vtheta_t$
\end{algorithmic}
\end{algorithm}

Our parameterization of $\mG(\vtheta)$ is uniquely inspired by the FFT-style Efficient Unitary Neural Network (EUNN) \citep{pmlr-v70-jing17a} designed for parameter-efficient representations of unitary matrices. We replace the fixed distance interactions by random interactions and force the result to be symmetric\footnote{This is because the true inverse Hessian is also symmetric.}, by choosing $\mG(\vtheta)$ to be the following product of many matrices:
\begin{equation}
\label{eq:hessian_representation}
  \begin{gathered}
    \mG(\vtheta) = \alpha_0\begin{pmatrix} \mI & \mathbf{0} \end{pmatrix}\tilde{\mG}(\vtheta)^T\tilde{\mG}(\vtheta) \begin{pmatrix} \mI \\ \mathbf{0} \end{pmatrix} \\
\tilde{\mG}(\vtheta) = \prod_{i=1}^N \mB(\vtheta^{(i)}) \mP_i
  \end{gathered}
\end{equation}
where $\alpha_0$ is a fixed initial learning rate, $\mP_i$ are randomly selected permutation matrices, and $\mB(\vtheta^{(i)})$ are block-diagonal matrices whose block contents are listed in a subsection $\vtheta^{(i)}$ of the parameter vector $\vtheta=(\vtheta^{(1)},\dots\vtheta^{(N)})$, as illustrated in Figure \ref{fig:lodo}. Every block is size $k \times k$ for some chosen $k$ (we use 4 for our setup). The $(\mI \quad 0)$ and $(\mI \quad 0)^T$ matrices are $n \times \tilde{n}$ and $\tilde{n} \times n$ respectively, where $\tilde{n}$ is some integer chosen to be larger than $n$ (we use $\tilde{n} = \lfloor 2n/k \rfloor k$ for our setup), and all other matrices are $\tilde{n} \times \tilde{n}$. By initializing each block matrix in $\mB(\vtheta^{(i)})$ to a random orthogonal matrix, we ensure that $\mG(\vtheta)=\alpha_0\mI$ upon initialization, despite the input information diffusing and mixing with itself as more of the first $N$ block matrices are applied (for our setup we choose $N=16$), normalizing hypergradient magnitudes to facilitate the initial meta-learning of $\vtheta$. In effect, this causes LODO to begin with the strategy of SGD with the same learning rate in all directions.

\begin{figure}[h!]
\begin{center}
\centering
\includegraphics[width=0.9\columnwidth]{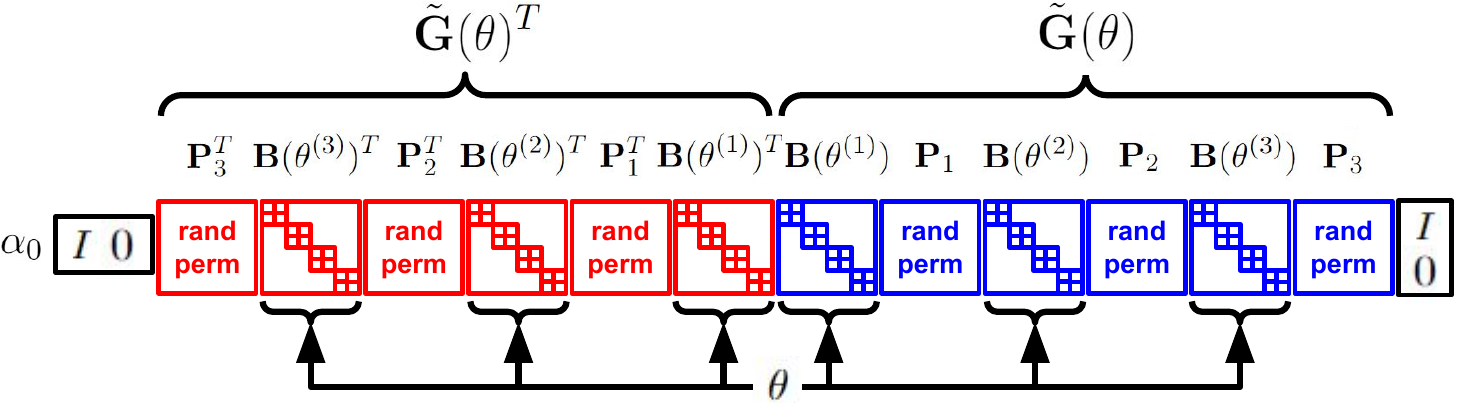}
\end{center}
\caption{Visualization of LODO's matrix structure of $\mG(\vtheta)$ from Equation (\ref{eq:hessian_representation}) for the approximation of the inverse Hessian. Reduced to a depth of 3, block size 2, and size $\tilde{n}=8$ matrices for illustration.
\label{fig:lodo}}
\end{figure}

The product $\mG(\vtheta)$ intentionally resembles the operation performed by a deep neural network with $N$ layers, millions of hidden neurons arranged in inverted bottleneck style, very sparse connections, and no activation functions. We intend for the random connections between neurons to bring expander graph properties to the computation graph of the neural network, such that input signals can diffuse and self-mix quickly without travelling long distances within the computation graph.

Since we expect the receptive field of output nodes to grow exponentially with depth, it takes $O(\log n)$ layers for all of the $n$ inputs to interact with each other, so we intend for the depth $N$ to be not much larger than $O(\log n)$. Applying permutations and block diagonal matrices both require $O(n)$ time and memory, so matrix-vector multiplication with the inverse Hessian estimate $\mG(\vtheta)$ costs $O(n \log n)$ time and memory. We intend for LODO to be used to train machine learning architectures of high parameter reuse such as CNNs and transformers, such that this $O(n \log n)$ cost of LODO is small in comparison to the forward and backward passes of the trained model itself.

\section{Theoretical Properties of LODO} \label{sec:theory}

In this section, we mainly present two theoretical results: in Section \ref{sec:learning_dynamics} about desirable preconditioner learning dynamics, and in Section \ref{sec:representations} about preconditioner expressiveness; we also show that LODO repels saddle points in Section \ref{sec:saddle} and there is further analysis from the literature for the preconditioner approximation error function in Appendix~\ref{app:lodo_generalize}.

\subsection{Inverse Hessian Learning Dynamics} \label{sec:learning_dynamics}

We first motivate that under certain conditions, LODO learns the Hessian over time. This result allows LODO to come arbitrarily close to the true inverse Hessian despite noise, which makes LODO more noise-tolerant than most other quasi-Newton methods, whose preconditioners may vary wildly in the presence of noise. We assume that the loss function to minimize is quadratic with fixed positive-definite Hessian $\mH$, where the minimum $\vx^*_t$ is perturbed by noise $\vs_t$ at every time step $t$,
\begin{gather}
\ell_t = f_t(\vx_t) = \frac{1}{2}(\vx_t-\vx^*_t)^T \mH (\vx_t-\vx^*_t) \label{eq:loss} \\
\vx^*_{t+1} = \vx^*_t + \vs_t.
\end{gather}
The perturbations $\vs_t$ are i.i.d. from some light tailed distribution of zero mean, for example a multivariate normal distribution $\mathcal{N}(\vs_t; 0, \mSigma)$.

\par For this section, we restrict our analysis to a simplified version of LODO, though the experiment of Section \ref{sec:noisy_quadratic_bowl} supports similar findings for the full version of LODO as well. In our simplified version, the inverse Hessian is approximated by a full dense matrix $\mG(\vtheta_t)$ rather than the compositionally sparse architecture of Equation \ref{eq:hessian_representation}, so this result is relevant to many hypergradient methods and is not restricted to LODO. We will also update $\vtheta_t$ using SGD of learning rate $\alpha$ instead of Adam; and we will feed the gradients directly into the $\mG$ network using Equation \ref{eq:quasi-newton} rather than accumulating them into EMAs first, ie. $\beta=0$.

\par Under these conditions, the gradient of loss with respect to both $\vx_t$ and $\vtheta_t$ can be explicitly solved for. LODO turns out to follow the training dynamics
\begin{gather}
\mA_{t+1} = \mA_t - \alpha \mH \vb_{t+1} \vb_t^T \mH^2 \label{eq:unapproximated1} \\
\vb_{t+1} = \mA_t\vb_t - \vs_t \label{eq:unapproximated2}
\end{gather}
with the change of variables to
$\mA_t = \mI-\mG(\vtheta_t)\mH$ the inverse Hessian approximation error and $\vb_t = \vx_t-\vx^*_t$ the quadratic minimum approximation error, to replace $\mG(\vtheta_t)$ and $\vx_t$ in the analysis. A full derivation of Equations \ref{eq:unapproximated1} and  \ref{eq:unapproximated2} is in Appendix~\ref{sec:learning_dynamics_proof_1}. Assuming that the learning rate $\alpha$ is low enough and the error $\mA_t$ begins small enough to have spectral norm less than 1, $\mA_t$ must evolve very slowly while $\vb_t$ comparatively quickly exponentially decays to a fixed distribution $\mathcal{B}_t$ dependent on $\mA_t$ as determined by Equation \ref{eq:unapproximated2}. Furthermore, since $\mA_t$ moves slowly, $\mathcal{B}_t\approx\mathcal{B}_{t_0}$ so long as $t-t_0$ for some reference time $t_0$ is not too large. Thus, to determine the direction of slow movement of $\mA_t$ (we seek to show that it moves towards zero), we can use Equation \ref{eq:unapproximated1} as though $\vb_t$ were sampled i.i.d. from $\mathcal{B}_0$. Appendix \ref{sec:learning_dynamics_proof_2} gives more rigorous justification for this approximation. Mathematically, we keep Equation \ref{eq:unapproximated1} but approximate Equation \ref{eq:unapproximated2} with
\begin{equation}
\vb_{t+1} = \mA_{t_0}\vb_t - \vs_t. \label{eq:approximated}
\end{equation}
By recursively substituting Equation \ref{eq:approximated} into itself and then into Equation \ref{eq:unapproximated1}, the total eventual contribution of all $\vs_\tau$ for $t_0 \leq \tau \leq t$ to the slow movement of $\mA_t$ is then
\begin{align}
\alpha \mH \mA_{t_0} \left(\sum_{n=0}^{\infty} {\mA_{t_0}}^{n}\left(\sum_{\tau=t_0}^{t} \vs_\tau\vs_\tau^T\right)({\mA_{t_0}}^{n})^T\right)\mH^2 \label{eq:net_movement}
\end{align}
with some extra doubly summed terms for with pairwise products between $\vs_{\tau_1}$ and $\vs_{\tau_2}$ (${\tau_1} \neq {\tau_2}$), and some singly summed terms for products between $\vb_{t_0}$ and $\vs_\tau$. In the long term (but with low enough $\alpha$ to retain the approximation of Equation \ref{eq:approximated}), the average contribution of each $\vs_\tau$ towards the total of Expression \ref{eq:net_movement} then converges to
\begin{align}
\alpha \mH \mA_{t_0} \left(\sum_{n=0}^{\infty} {\mA_{t_0}}^{n}\mathbb{E}\left[ \vs_\tau\vs_\tau^T\right]({\mA_{t_0}}^{n})^T\right)\mH^2,
\label{eq:euler_method}
\end{align}
where pairwise products of $\vs_i$ and $\vs_j$ for $i \neq j$ and other terms with $\vb_{t_0}$ are negligible due to mutual independence and small $\alpha$. Expression~\ref{eq:euler_method} is then the
step direction for the Euler method for approximating the solution to the differential equation
\begin{align}
\frac{\text{d}\mA}{\text{d}t} =& -\mH \mA \sum_{n=0}^\infty \mA^n \mathbb{E}\left[\vs\vs^T\right] (\mA^n)^T \mH^2, \label{eq:diffeq}
\end{align}
which can be shown to cause $\mA$ to flow towards zero as desired.\footnote{$\mA$ flows towards zero because by substituting the eigendecomposition $\mH = \mU \mD \mU^T$ where $\mU \mU^T=\mI$, and using $\mB=\mU^T\mA\mU$, we can show that the norm of $\mB\mD^{-1}$ decreases over time,
\begin{align}
\frac{\text{d}||\mB\mD^{-1}||_F^2}{\text{d}t} =& \frac{\text{d}}{\text{d}t}\text{tr}\left(\mD^{-2}\mB^T\mB\right) \\
=& -2\text{tr}\left(\mB^T\mD \mB \sum_{n=0}^\infty \mB^n  \mU^T\mathbb{E}\left[\vs\vs^T\right] \mU (\mB^n)^T\right) \\
=& -2\left|\left|\mD^{\frac{1}{2}} \mB \left(\sum_{n=0}^\infty \mB^n  \mU^T\mathbb{E}\left[\vs\vs^T\right] \mU (\mB^n)^T\right)^{\frac{1}{2}} \right|\right|_F^2 \\
\leq& 0
\end{align}
where the strict equality is only satisfied for $\mA=0$.} One might ask how the approximation made by Equation \ref{eq:approximated} by taking $\mA_t \approx \mA_{t_0}$ continues to hold while $\mA$ flows to zero. When given a time $t_0$, this approximation is only necessary to reach the conclusion that over short distances, $\mA$ has the long-term flow rate/direction of Equation \ref{eq:diffeq}. After every Euler step on Equation \ref{eq:diffeq}, we may reinstantiate $t_0$ to the new value of $t$, such that the accuracy of the approximation $\mA_t \approx \mA_{t_0}$ is fully restored for Equation \ref{eq:approximated} to be accurate, and Equation \ref{eq:diffeq} can be reached once again so the next Euler step can be taken. It is in this way that the movement of $\mA$ is described by Equation \ref{eq:diffeq}.

The flow of $\mathbf{A}$ towards zero implies that $\mG(\vtheta_t)$ approaches $\mH^{-1}$, meaning that the inverse Hessian approximation improves over time. Therefore, it is reasonable to believe that LODO learns better inverse Hessians over time, given small enough learning rate and good initialization. The Frobenius norm of the error decays faster when magnitudes/norms of $\mH$ and $\mathbb{E}\left[\vs\vs^T\right]$ are higher, indicating that both curvature of the Hessian and noisy movement of the quadratic bowl's minimum are good for learning the Hessian for fixed $\alpha$, as long as the approximations required for the above analysis stay accurate. An interpretation of this phenomenon is that the noise $\vs$ in every direction provides a training signal for LODO to learn that direction's curvature and is amplified by the Hessian $\mH$.

\subsection{Saddle Point Repulsion} \label{sec:saddle}

We may also produce another analysis for quadratic loss functions of Hessian $\mH$ with a direction of negative curvature, to show that LODO repels saddle points. Again, we restrict our analysis by omitting momentum as in Section \ref{sec:learning_dynamics}, but here we keep the original architecture of $\mG(\vtheta_t)$ from Equation \ref{eq:hessian_representation} and the Adam meta-optimizer, and merely set the meta-learning rate $\alpha$ to be negligibly small. From Equation \ref{eq:hessian_representation}, $\mG(\vtheta_t)$ is a positive-definite symmetric matrix in most cases of $\vtheta_t$, and is effectively fixed due to small $\alpha$. Then, $\mG(\vtheta_t)\mH$ must have some negative eigenvalue $\lambda$ with eigenvector $\vy$ (see Appendix \ref{sec:psd} for proof), such that the matrix $\mA=\mI-\mG(\vtheta_t)\mH$ has an eigenvalue of norm greater than 1, meaning that the displacement $\vb=\vx-\vx^*$ from the center of the saddle $\vx^*$ blows up exponentially in the $\vy$ direction. Since $\mG(\vtheta_t)\mH\vy = \lambda \vy$, left multiplying by $\vy^T\mG(\vtheta_t)^{-1}$ further shows that $\vy^T\mH\vy < 0$, implying that the direction $\vy$ of saddle point repulsion is one of negative curvature, which decreases the loss.

\subsection{Preconditioner Expressiveness} \label{sec:representations}
In this section, we seek to show that our parameterization of $\mG(\vtheta)$ is highly expressive; LODO may represent a wider range of inverse Hessians than many other methods. In fact, we show that with an increase in parameter count by only a logarithmic factor in the task dimension, our fixed parameterization of $\mG(\vtheta)$ can exactly replicate any possible parameterizations $\tilde{\mF}^T\tilde{\mF}$ where vector multiplication with a matrix $\tilde{\mF}$ is computable with almost any arbitrary sparse linear neural network of a given size. Our parameterization is thus capable of replicating scalar and diagonal preconditioners from \citep{https://doi.org/10.48550/arxiv.1703.04782} and  \citep{https://doi.org/10.48550/arxiv.2202.00145} with only $O(\tilde{n} \log \tilde{n})$ parameters, and low rank preconditioners \citep{https://doi.org/10.48550/arxiv.1910.08461} with only $O(\tilde{n} \log^2 \tilde{n})$ parameters. Definition \ref{def:epsilon} creates a function $\epsilon$ to characterize the mixing rate of random transpositions when trying to shuffle lists, while Theorem \ref{thm:representability} uses this $\epsilon$ function to lower bound the probability that the $\tilde{\mG}(\vtheta)$ network in LODO can represent other linear neural networks when randomly sampling over the permutations $\mP_i$ in Equation \ref{eq:hessian_representation}.

\begin{definition} \label{def:epsilon}
Uniformly sample a sequence of $N\tilde{n}/2$ transpositions of two out of $\tilde{n}$ elements, for integers $\tilde{n}/2 \in \mathbb{N}$ and $N \in \mathbb{N}$, with the condition that every successive block of $\tilde{n}/2$ transpositions commutes internally (transpositions can be rearranged within a block). We replace each transposition with the identity operation with probability $1/2$, and then compose the sequence of transpositions/identities to form a permutation. Then, we define the function $\epsilon(N\tilde{n}/2,\tilde{n})$ such that the expected entropy of this permutation, given the original sequence but not given the locations where identities were placed, is $\log \tilde{n}! - \epsilon(N\tilde{n}/2,\tilde{n})$.
\end{definition}

\begin{restatable}{theorem}{representability} \label{thm:representability}
Uniformly sample permutations $\mP_i$ and create block-diagonal matrices $\mB(\vtheta^{(i)})$ where every block is $2 \times 2$, and whose block contents are listed by the parameters $\vtheta^{(i)}$. Use these to construct the LODO subnetwork $\tilde{\mG}(\vtheta)$ as in Equation \ref{eq:hessian_representation} with some depth $N$ and hidden dimension $\tilde{n}$. Construct any linear neural network $\tilde{\mF}$ with input dimension, output dimension, number of connections per layer at most $\tilde{n}$, at most $k$ incoming and at most $k$ outgoing connections for every neuron, depth $d$, and otherwise any arrangement of connections. Then, there is a probability of at least
\begin{align}
1-\tilde{n}!N\sqrt{\frac{1}{2}\epsilon\left(\frac{\tilde{n}N}{4d(\lceil \log_2 k \rceil + 1)},\tilde{n}\right)}
\end{align}
that $\tilde{\mG}(\vtheta)=\tilde{\mF}$ for some $\vtheta$.
\end{restatable}
We provide a constructive proof in Appendix~\ref{sec:representation_proof}. 

We believe that random transpositions in the style of Definition \ref{def:epsilon} are a quick way to shuffle lists via transposition, since the Cayley graph over the symmetric group generated by all transpositions has good expansion properties \citep{https://doi.org/10.48550/arxiv.2204.03153}. In other words, we hypothesize that for large $N$ and $\tilde{n}$, we have $\epsilon(N\tilde{n}/2,\tilde{n}) \approx c_1 \tilde{n}e^{-c_2N\tilde{n}/2}$ for some positive constants $c_1$ and $c_2$. This would imply that the probability that $\tilde{\mG}(\vtheta)$ can represent all possible $\tilde{\mF}$ is at least approximately
\begin{align}
& 1-\tilde{n}!N\sqrt{\frac{c_1\tilde{n}}{2}\exp\left(\frac{-c_2\tilde{n}N}{4d(\lceil \log_2 k \rceil + 1)}\right)} \label{eq:representability_conjecture}
\end{align}
which can be made to be above $1-(n!)^c$ for any constant $c$ by using sufficient depth $N$ which goes like $N\propto d(\log_2 k)\log \tilde{n}$, due to Stirling's approximation. Thus we believe that by only being $O((\log_2 k)\log \tilde{n})$ times deeper than $\tilde{\mF}$, we can make it very likely that our model $\tilde{\mG}(\vtheta)$ can represent all possible $\tilde{\mF}$.

\section{Experiments with LODO} \label{sec:experiments}
We present here a number of tasks which provide experimental evidence for the theoretical results we claim, though we test a variety of optimizers on these tasks. Appendix~\ref{sec:hyperparameters} explains how we tuned each optimizer's hyperparameters for each task. Optimizers were used 8 times for every experiment to ensure reproducibility of results by randomly sampling over the permutations $\mP_i$ in Equation \ref{eq:hessian_representation} with different randomization seeds, unless otherwise stated. Error margins in Figures and intervals in tables indicate $\pm1$ standard deviation across the the 8 runs. Our timing setup is described in Appendix \ref{sec:hardware}.

\begin{figure}[t]
\centering
\includegraphics[width=0.33\columnwidth]{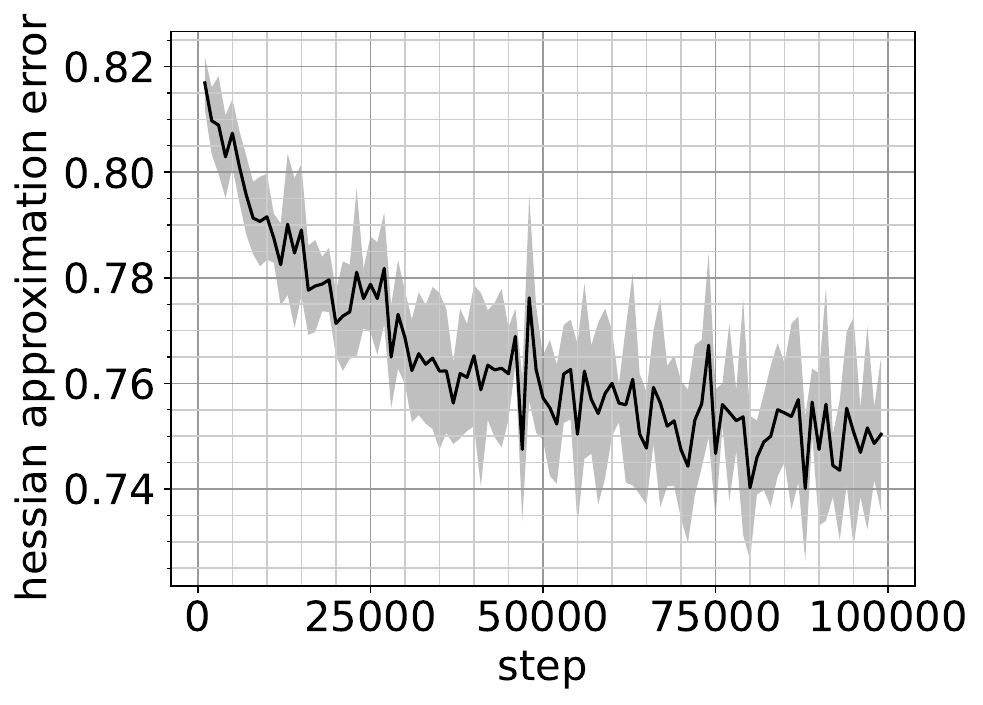}
\includegraphics[width=0.58\columnwidth]{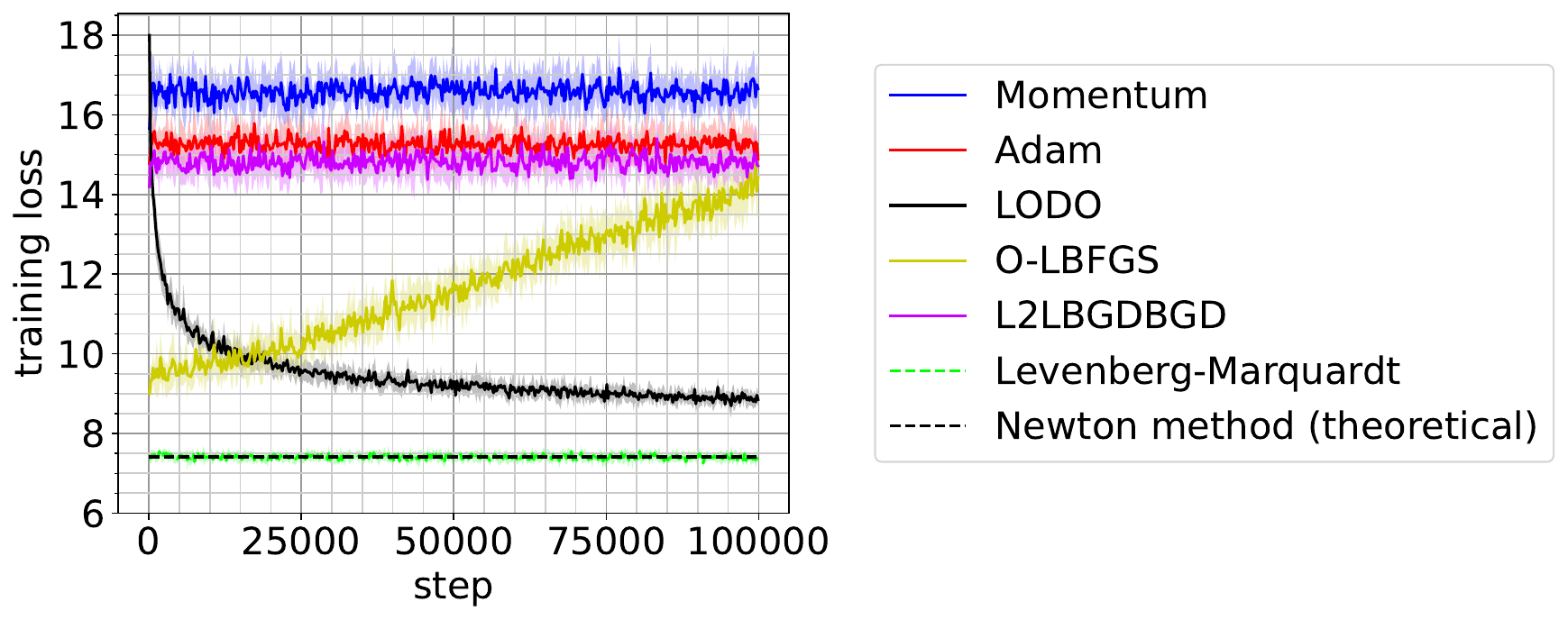}
\vspace*{-1em}
\caption{\textbf{Left:} LODO's average inverse Hessian approximation error $\sigma = \sqrt{||\mI - \mG(\vtheta_t)\mH||_F^2/n}$ on the noisy quadratic bowl task of Section \ref{sec:noisy_quadratic_bowl}. $\sigma^2$ is measured by the unbiased estimator $\frac{1}{100}\sum_{i=1}^{100} ||(\mI - \mG(\vtheta_t)\mH)\vv_i||_2^2$ with random independent unit vectors $\vv_i$. $\sigma=1$ when $\mG(\vtheta_t)$ is trivially zero and $\sigma=0$ when $\mG(\vtheta_t)=\mH^{-1}$ as desired. Ordinarily, Theorem \ref{thm:representability} would dictate that $\mG(\vtheta_t)=\mH^{-1}$ is reachable for some $\theta_t$, but $\theta_t$ is nowhere near overparameterized enough for the theorem to apply, so we do not reach even close to $\sigma=0$. Despite this, $\sigma$ still decreases over time. \textbf{Right:} Average training loss learning curves. The dotted line shows the theoretically best possible loss using Newton's method. Better optimizers maintain lower losses after infinite steps, since for this task, loss is introduced over time and the optimizer serves to quickly suppress it. Solid lines indicate methods whose time complexities allow for scaling to large neural networks (ie. $O(n\log n)$ or less,) while dotted lines indicate methods which are too expensive for this.}
\label{fig:noisy_quadratic_bowl_performance}
\end{figure}

\begin{table}[t]
\centering
\caption{Average tracking error of the quadratic bowl minimum on the noisy quadratic bowl task of Section \ref{sec:noisy_quadratic_bowl}. Values are averaged over the last 10\% of training before the stated training milestone. The theoretically best possible loss using Newton's method is also listed. The version of L-BFGS is one with stochastic modifications from \citep{pmlr-v2-schraudolph07a}, instead of the original from \citep{LBFGS}. Out of the optimizers with reasonable time complexity for scaling to large neural networks (ie. $O(n\log n)$ or less,) LODO performs the best. \\}
\small
\begin{tabular}{clcc} \toprule
& & \multicolumn{2}{c}{Training loss} \\
 \cmidrule(r){3-4}
Time complexity & Optimizer & 100k steps & 300 sec. \\ \midrule
& Adam \citep{kingma2014adam} & $15.28 \pm 0.07$ & $15.26 \pm 0.08$\\
& Momentum & $16.59 \pm 0.08$ & $16.56 \pm 0.05$ \\
& RMSprop \citep{rmsprop} & $22.37 \pm 0.06$ & $22.47 \pm 0.13$ \\
$O(n)$ & Yogi \citep{10.5555/3327546.3327647} & $15.35 \pm 0.10$ & $15.16 \pm 0.05$ \\
& L-BFGS \citep{pmlr-v2-schraudolph07a} & $49.30 \pm 1.04$ & $40.60 \pm 1.40$ \\
& O-LBFGS \citep{pmlr-v2-schraudolph07a} & $13.96 \pm 0.08$ & $10.80 \pm 0.17$ \\
& L2LBGDBGD \citep{andrychowicz2016learning} & $14.79 \pm 0.08$ & $14.81 \pm 0.10$ \\
\midrule
$O(n\log n)$ & \textbf{LODO (ours)} & $\mathbf{8.99 \pm 0.05}$ & $\mathbf{10.05 \pm 0.22}$ \\ \midrule
& BFGS \citep{10.1093/imamat/6.1.76} & diverged & diverged \\
$\geq O(n^2)$ & Levenberg-Marquardt \citep{levenberg1944method} & $7.40 \pm 0.02$ & $7.35 \pm 0.07$ \\
& Newton Method (Optimal) & \multicolumn{2}{c}{7.41} \\ \bottomrule
\end{tabular}
\label{table:noisy_quadratic_bowl_performance}
\end{table}

\subsection{Noisy Quadratic Bowl} \label{sec:noisy_quadratic_bowl}
\rev{In Figure \ref{fig:noisy_quadratic_bowl_performance} and Table \ref{table:noisy_quadratic_bowl_performance}} we use various optimizers to track the minimum of a quadratic bowl of fixed true Hessian $\mH$ as its minimum is perturbed by noise at every step, to demonstrate that LODO correctly learns its inverse Hessian representation as claimed in Section \ref{sec:learning_dynamics}. The setup of the noisy quadratic bowl is the same as in Section \ref{sec:learning_dynamics} and details are provided in Appendix~\ref{sec:noisy_quadratic_bowl_details}. We interpret an optimizer to be better if it can maintain a lower loss in the infinite step limit, since the error builds at a constant rate over time and the optimizer's role is to react and correct it as quickly as possible. We tested each optimizer by using it to track the minimum of the moving quadratic bowl over 100k steps. Learning curves in Figure \ref{fig:noisy_quadratic_bowl_performance} and losses in Table \ref{table:noisy_quadratic_bowl_performance} show that LODO tracks the minimum more accurately than other optimizers, and that an estimator of the inverse Hessian approximation error $||\mI - \mG(\vtheta_t)\mH||_F^2/n$ decays over time. Other optimizers underperform because their preconditioners are less expressive, being either diagonal or low-rank and thus unable to capture the off-diagonal elements or non-top-few singular values of the true inverse Hessian, leading to a higher loss.

\rev{
\paragraph{Sensitivity to Noise.}
In Table~\ref{table:new_noisy_quadratic_bowl_performance} and Figure~\ref{fig:noise_covariance_test_relative} of the Appendix, we test the performance of LODO in the presence of noise of isotropic variance $v$ no longer fixed to 1. While some other optimizers' noise-rescaled training losses $\ell/v$ (e.g. RMSProp, Yogi and Adam) are heavily dependent on the noise covariance $v$, LODO's noise-rescaled training loss does not, and LODO still consistently outperforms all the other optimizers for all $v$.
We believe LODO is unaffected by noise magnitude because the meta-optimizer Adam has long term behavior invariant to gradient magnitudes (ignoring hyperparameter $\epsilon$), meaning LODO's step size behavior is equivariant to gradient magnitudes. This pairs well with the scaling properties of the noisy quadratic bowl problem, which then allow LODO to achieve identical noise-rescaled training loss regardless of noise magnitude.


}

\rev{
\paragraph{Pretrained and Frozen Preconditioner.} To test the importance of LODO learning $\mG(\vtheta_t)$ on the fly, we froze $\mG(\vtheta_t)$ after using LODO for 100k steps as before and then tried to use this ``pre-trained'' LODO on a new quadratic bowl. Within 10 steps, the loss diverged to $4.997 \times 10^{22}$, indicating that LODO is best trained on the fly rather than beforehand on a task distribution.
}


\begin{wrapfigure}[15]{h!}{0.5\textwidth}
\centering
\vspace*{-3em}
\includegraphics[width=0.5\columnwidth]{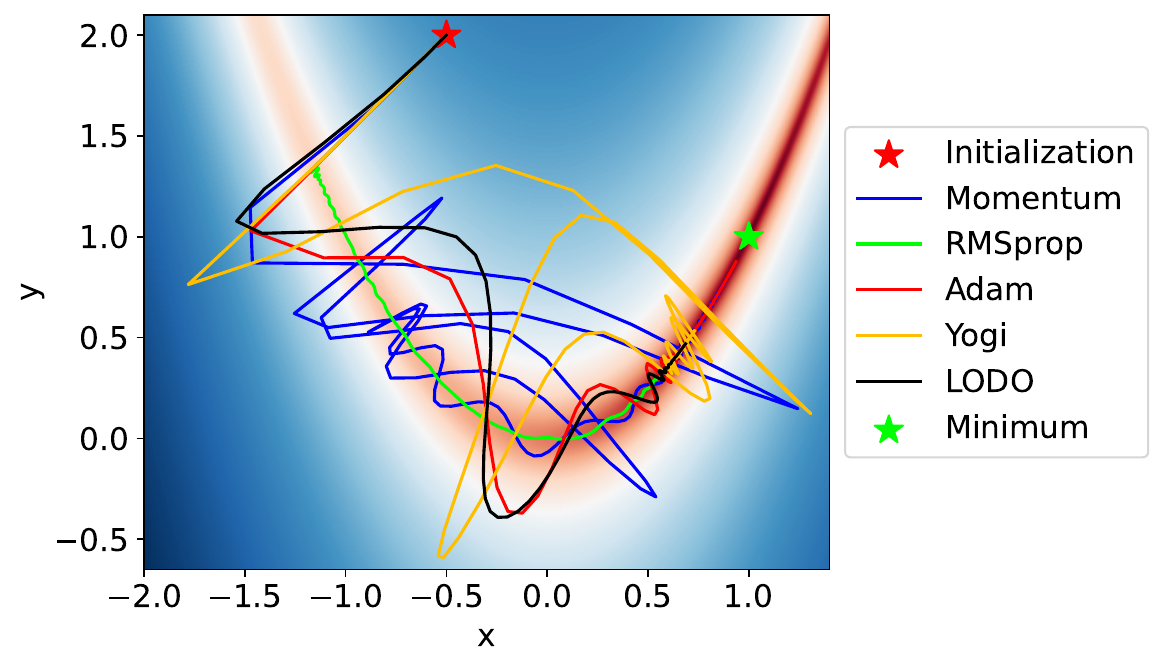}
    \vspace*{-1.5em}
    \caption{100 step trajectories of various optimizers on the Rosenbrock function minimization task of Section \ref{sec:rosenbrock}. The red star marks the initialization and the green star marks the location of the global minimum.}
    \label{fig:rosenbrock_trajectory}
\end{wrapfigure}

\subsection{Rosenbrock Function} \label{sec:rosenbrock}
We probe the behavior of LODO with a small test task of finding the minimum of a rescaled Rosenbrock function $f(x,y)=0.01(x-1)^2 + (x^2-y)^2$, which has no local minima and one global minimum at $(x,y)=(1,1)$. We initialized the optimizers at $(x,y)=(-0.5, 2)$ and gave them 200 steps to run. The trajectory taken by LODO, shown in Figure \ref{fig:rosenbrock_trajectory}, is similar to the short timescale dynamics of other optimizers using momentum, in that it tends to overshoot and then correct itself in an oscillatory manner, due to its initialization with the momentum optimization strategy. Learning curves in Figure \ref{fig:rosenbrock_learning_curve} and losses in Table \ref{table:rosenbrock_performance} of Appendix~\ref{sec:rosenbrock_performance} show the performance of all the optimizers on this task.

\subsection{Image Generation} \label{sec:image_generation}
We design a challenge for LODO---an autoregressive image generator for MNIST. We intend to demonstrate the effectiveness of LODO at scale when used to train a semi-realistic neural network with lots of parameter sharing. The task is similar to training a PixelCNN \citep{10.5555/3157382.3157633} to generate MNIST images \citep{726791}, and is fully described in Appendix~\ref{sec:image_generation_details}. We tested LODO alongside a variety optimizers for 300k steps, though we could not get any quasi-Newton methods to converge on this task. We do not compare against K-FAC \citep{martens2015optimizing} because this would require matrix inversions of time complexity much larger than the number of CNN weights. Learning curves in Figures \ref{fig:autoregression_performance} and losses in Table \ref{table:autoregression_performance} show that LODO trains at a speed that is competitive against other optimizers. Figure \ref{fig:autoregression_generations} in Appendix~\ref{sec:image_generation_details} provides some imitation MNIST images randomly sampled using this model.

\begin{table}[h!]
\centering
\caption{Negative log likelihoods in nats per pixel after training for 300k steps on the MNIST image generation task of Section \ref{sec:image_generation} with every optimizer. Values are averaged over the last 10\% of training before the stated training milestone. The top 3 optimizers are underlined for each metric. \\}
\small
\begin{tabular}{lccccc} \toprule
& \multicolumn{2}{c}{Training loss} & \multicolumn{2}{c}{Test loss} \\
 \cmidrule(r){2-3} \cmidrule(r){4-5} 
Optimizer & 300k steps & 50k sec. ($\sim 14$ h.) & 300k steps & 50k sec. & Steps / sec. \\ \midrule
Adam & $0.830 \pm 0.005$ & $0.859 \pm 0.009$ & 0.809 & 0.854 & $7.08 \pm 0.03$ \\
Momentum & $0.708 \pm 0.005$ & \underline{$0.698 \pm 0.005$} & \underline{0.689} & \underline{0.685} & $7.10 \pm 0.03$ \\
RMSprop & $0.917 \pm 0.010$ & $0.920 \pm 0.014$ & 0.931 & 0.899 & $7.10 \pm 0.02$ \\
Yogi & $\underline{0.683 \pm 0.002}$ & $\underline{0.677 \pm 0.003}$ & \underline{0.686} & \underline{0.674} & $7.42 \pm 0.02$ \\
LARS \citep{https://doi.org/10.48550/arxiv.1708.03888} & \underline{$0.701 \pm 0.006$} & $0.702 \pm 0.006$ & \underline{0.688} & \underline{0.688} & $5.89 \pm 0.02$ \\
\midrule
LODO (ours) & \underline{$0.696 \pm 0.004$} & \underline{$0.698 \pm 0.005$} & \underline{0.689} & 0.689 & $5.64 \pm 0.02$ \\
\bottomrule
\end{tabular}
\label{table:autoregression_performance}
\end{table}

\begin{figure}[h!]
\centering
\includegraphics[width=0.265\columnwidth]{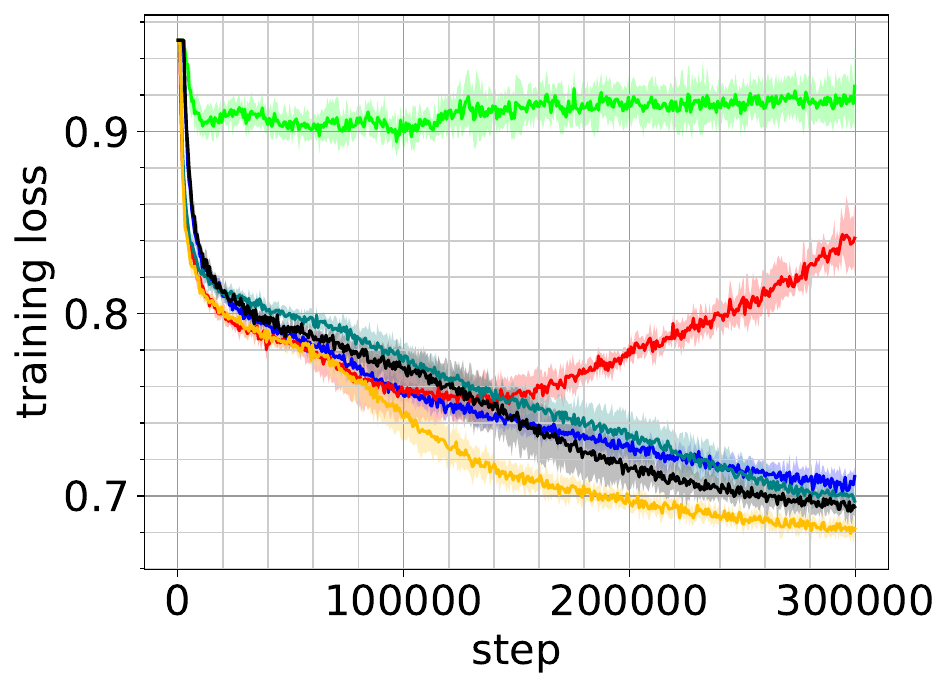}
\includegraphics[width=0.25\columnwidth]{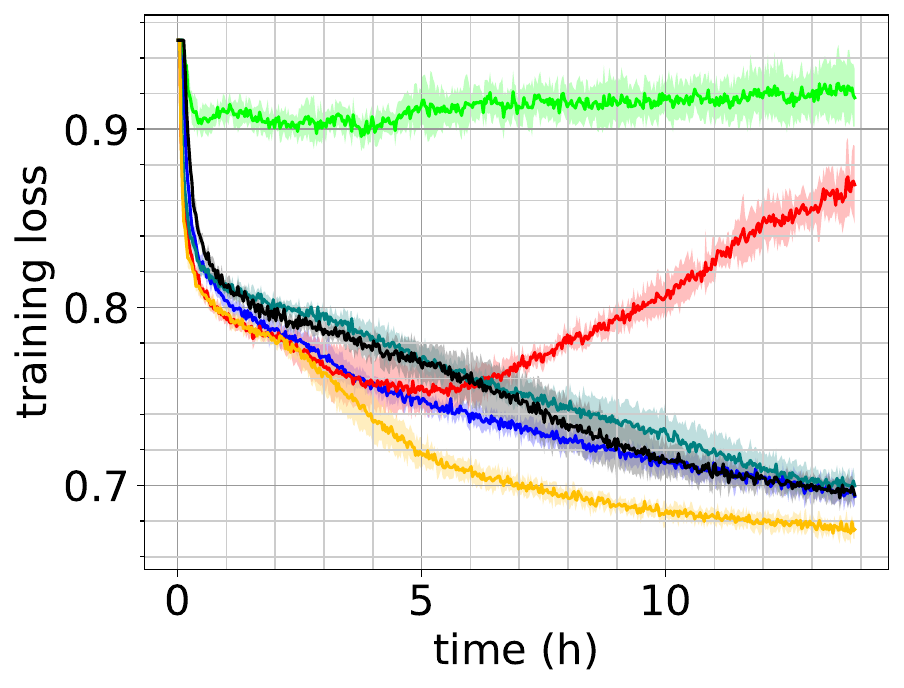}
\includegraphics[width=0.395\columnwidth]{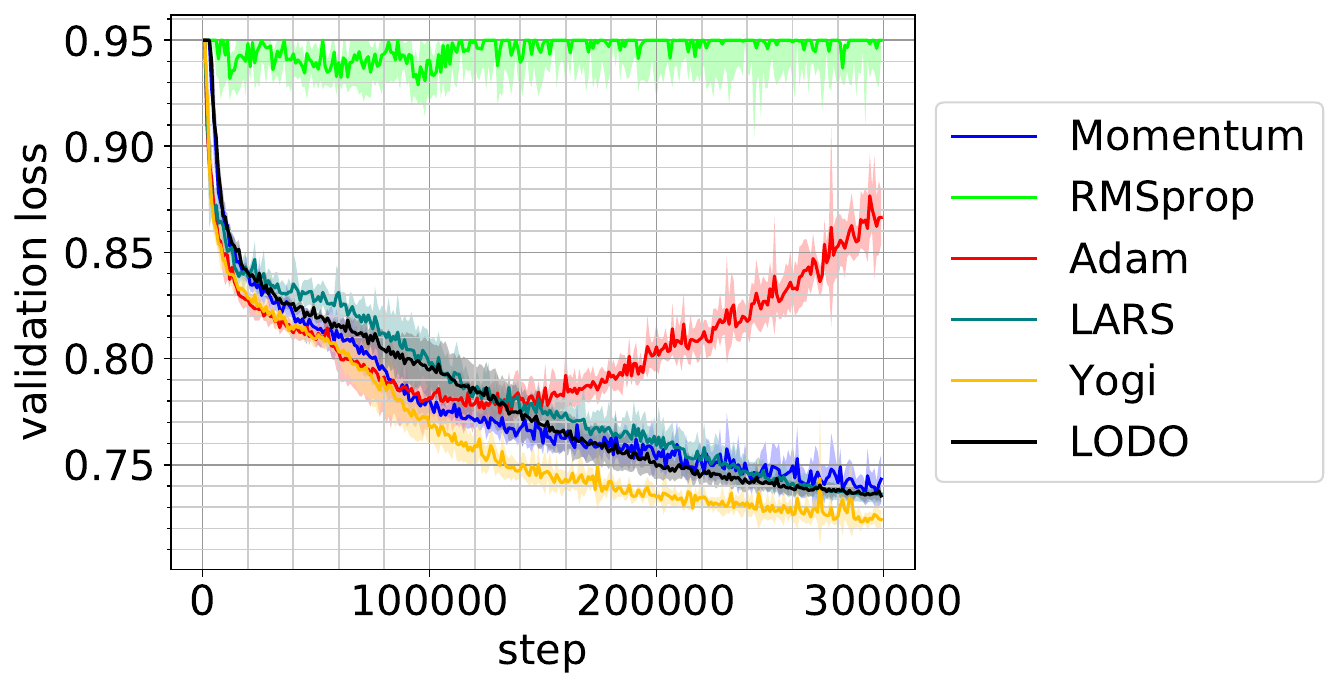}
\caption{Training loss learning curves on the MNIST image generation task of Section \ref{sec:image_generation}. \textbf{Left:} By step. \textbf{Middle:} By time. Our timing setup is described in Appendix \ref{sec:hardware}. \textbf{Right:} Validation loss by step, using a subset of 64 images excluded from the training data. Each image provides 784 pixel colors to predict, so the validation dataset effectively consists of 50176 samples.}
\label{fig:autoregression_performance}
\end{figure}

Using this setup, we also study the contributions of various components of LODO to its performance by replacing components to observe a performance change, as detailed below. For example, we replace the Adam meta-optimizer with SGD to form a new version of LODO (which we call ``LODO-SGD''); its performance is shown in Table \ref{table:autoregression_performance_ablations}. Other modifications are listed below.

\subsubsection{Residual Connections} 
We would like to show that there exist opportunities to further develop the architecture of LODO. We modify the matrix decomposition in Section \ref{sec:method} to $\mG(\vtheta)=\alpha_0 \begin{pmatrix} \mI & 0 \end{pmatrix} \tilde{\mG}(\vtheta)^T \mD \tilde{\mG}(\vtheta) \begin{pmatrix} \mI & 0 \end{pmatrix}^T$ by adding learned diagonal matrix $\mD$ in the middle and changing $\tilde{\mG}(\vtheta)$ to a product of many weighted permutations each added to the identity matrix. The neural network which $\tilde{\mG}(\vtheta)$ represents now has residual connections, and the initialization is modified accordingly. Losses in Table \ref{table:autoregression_performance_ablations} show that this version (which we call ``LODO-Residuals'') performs only slightly worse than LODO, reflecting the potential for further development in the architecture design of LODO.

\begin{table}[h]
\centering
\caption{Negative log likelihoods in nats per pixel after training for 300k steps on the MNIST image generation task of Section \ref{sec:image_generation} with ablated versions of LODO. \\}
\small
\begin{tabular}{lccccc} \toprule
& \multicolumn{2}{c}{Training loss} & \multicolumn{2}{c}{Test loss} \\
 \cmidrule(r){2-3} \cmidrule(r){4-5} 
Optimizer & 300k steps & 50k sec. ($\sim 14$ h.) & 300k steps & 50k sec. & Steps / sec. \\ \midrule
LODO & $0.696 \pm 0.004$ & $0.698 \pm 0.005$ & 0.689 & 0.689 & $5.64 \pm 0.02$ \\ \midrule
LODO-Diagonal \citep{https://doi.org/10.48550/arxiv.2202.00145} & Diverged & Diverged & Diverged & Diverged & $9.92 \pm 0.09$ \\
LODO-Global \citep{https://doi.org/10.48550/arxiv.1703.04782} & $0.770 \pm 0.035$ & $0.919 \pm 0.139$ & 0.747 & 0.801 & $9.92 \pm 0.03$ \\
LODO-Residuals & $0.701 \pm 0.004$ & $0.750 \pm 0.008$ & 0.693 & 0.741 & $3.31 \pm 0.03$ \\
LODO-No-Momentum & $0.753 \pm 0.007$ & $0.756 \pm 0.007$ & 0.750 & 0.752 & $5.46 \pm 0.06$ \\
LODO-SGD & $0.709 \pm 0.004$ & $0.714 \pm 0.004$ & 0.698 & 0.707 & $5.44 \pm 0.02$ \\
\bottomrule
\end{tabular}
\label{table:autoregression_performance_ablations}
\end{table}

\subsubsection{Simpler Approximate Hessians} \label{sec:ablations_emas}
We presumed that the representability result of Section~\ref{sec:representations} is only useful because LODO's strength comes from the flexibility that $\mG(\vtheta)$ gives in configuring pairwise interactions between parameters. We therefore expect that using a simpler form of Hessian should hurt the performance of LODO. We test two simpler forms of $\mG(\vtheta)$: $\mG(\vtheta) = \alpha_0\text{diag}(\vtheta)$ (which we call ``LODO-Diagonal'') for $\vtheta \in \mathbb{R}^n$ initialized to a vector of ones, similar to \citep{https://doi.org/10.48550/arxiv.2202.00145}---and the even simpler $\mG(\vtheta) = \alpha_0\theta \mI$ (which we call ``LODO-Global'') for $\theta \in \mathbb{R}$ initialized to 1, as in \citep{https://doi.org/10.48550/arxiv.1703.04782}. Losses in Table \ref{table:autoregression_performance_ablations} show that the original version of LODO performs the best, verifying our hypothesis.

\subsubsection{Effects of Using EMAs of Gradients} Similarly to how momentum works for SGD, LODO's input gradients are preproccessed by accumulation into EMAs. To test our claim in Section~\ref{sec:method} that momentum benefits LODO, we try 8 separate momentum decay rates spaced in a logarithmic grid from no momentum to the optimal amount of momentum found ($\beta = 0.9343$), and test each decay rate once. Figure \ref{fig:ablation_parameterized} shows a clear trend that at least up to the optimal decay rate, increasing the effect of momentum improves LODO. We also try removing momentum completely (we call the modified version ``LODO-No-Momentum''); results are shown in Table \ref{table:autoregression_performance_ablations}.

\begin{wrapfigure}[15]{r}{0.4\textwidth}
    \vspace*{-10mm}
    \centering
    \includegraphics[width=0.4\textwidth]{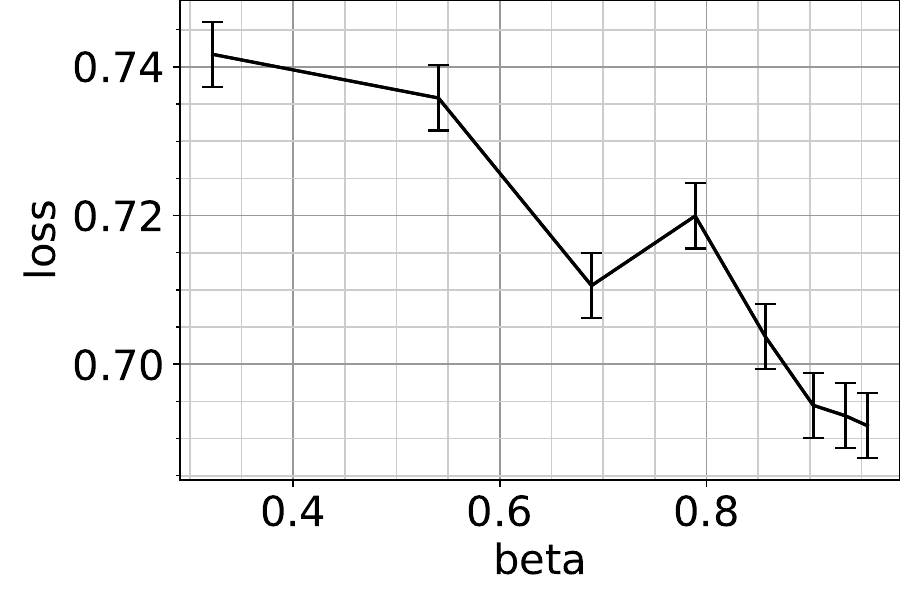}
    \vspace*{-6.2mm}
    \caption{LODO's training loss as a function of the momentum decay coefficient $\beta$, averaged over the last 10\% of 300k steps, for the image generation task of Section \ref{sec:image_generation}. Momentum improves LODO. Error bars depict LODO's uncertainty from Table \ref{table:autoregression_performance}.}
    \label{fig:ablation_parameterized}
\end{wrapfigure}

\rev{
\subsection{Image Classification}
\label{sec:resnet18_cifar10}

We conduct an experiment on image classification with Resnet-18~\citep{he2016deep} on CIFAR10~\citep{krizhevsky2009learning}. We use the standard Resnet setup on CIFAR10 by replacing the 7x7 convolution with a 3x3 one, and removing the maxpool in the first convolutional block. We use the standard data augmentation and a batch size of 2048.

In Figure~\ref{fig:resnet_performance} and Table~\ref{table:resnet_performance} we present a more detailed view of the performance of the models, demonstrating the results of our experiment. We observe that LODO performs competitively with the best-performing optimizers (Adam, Yogi, Momentum, and LARS), and noticeably outperforms RMSProp. The ``$\geq O(n^2)$'' family of optimizers cannot be run on tasks of this scale. Since there are many crucial regularization techniques in computer vision, such as data augmentation and weight decay, a more in-depth study of how LODO interfaces with them is a fruitful direction for future work.

\begin{figure}[thb]
\centering
\includegraphics[width=0.263\columnwidth]{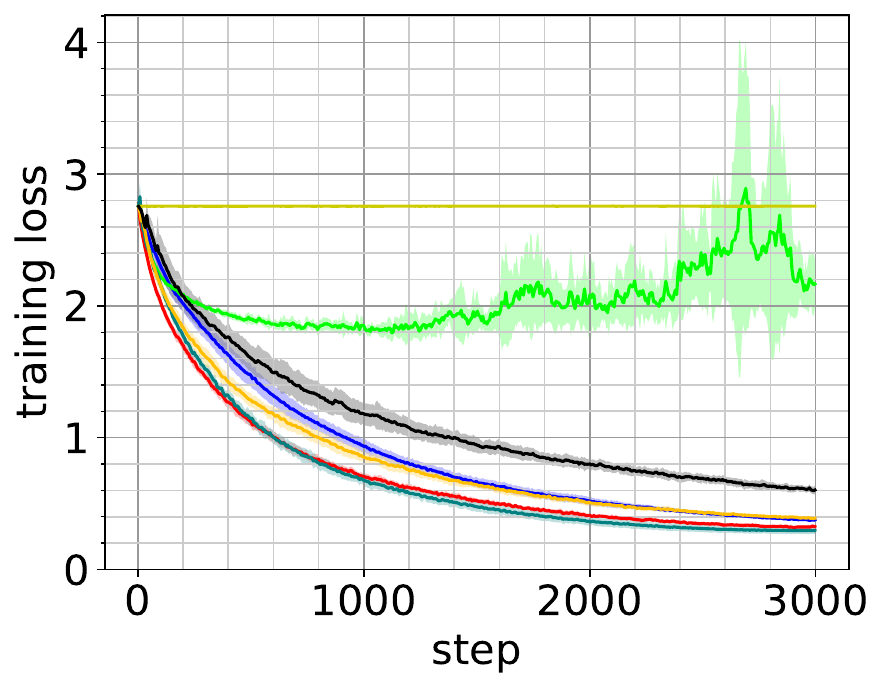}
\includegraphics[width=0.257\columnwidth]{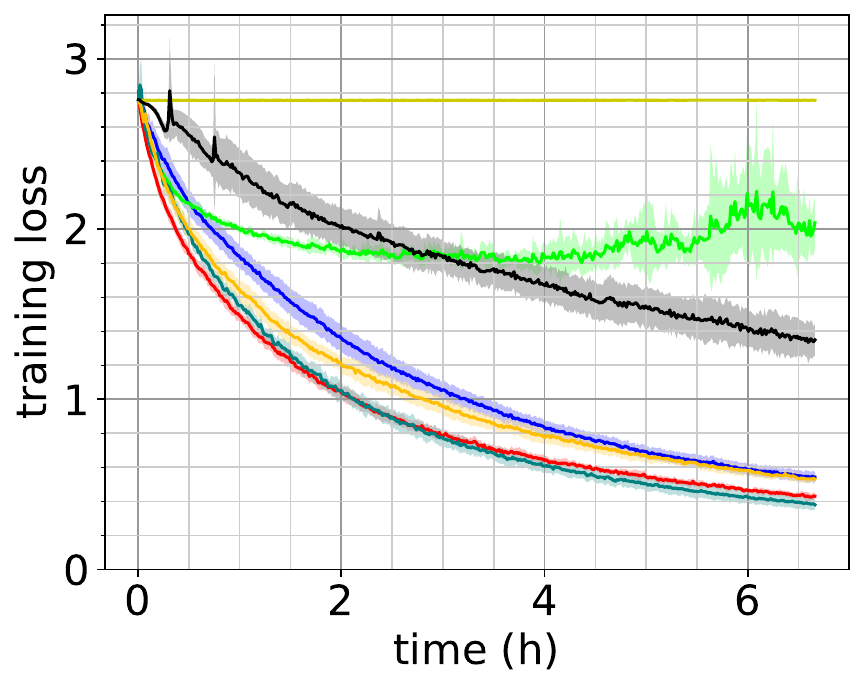}
\includegraphics[width=0.39\columnwidth]{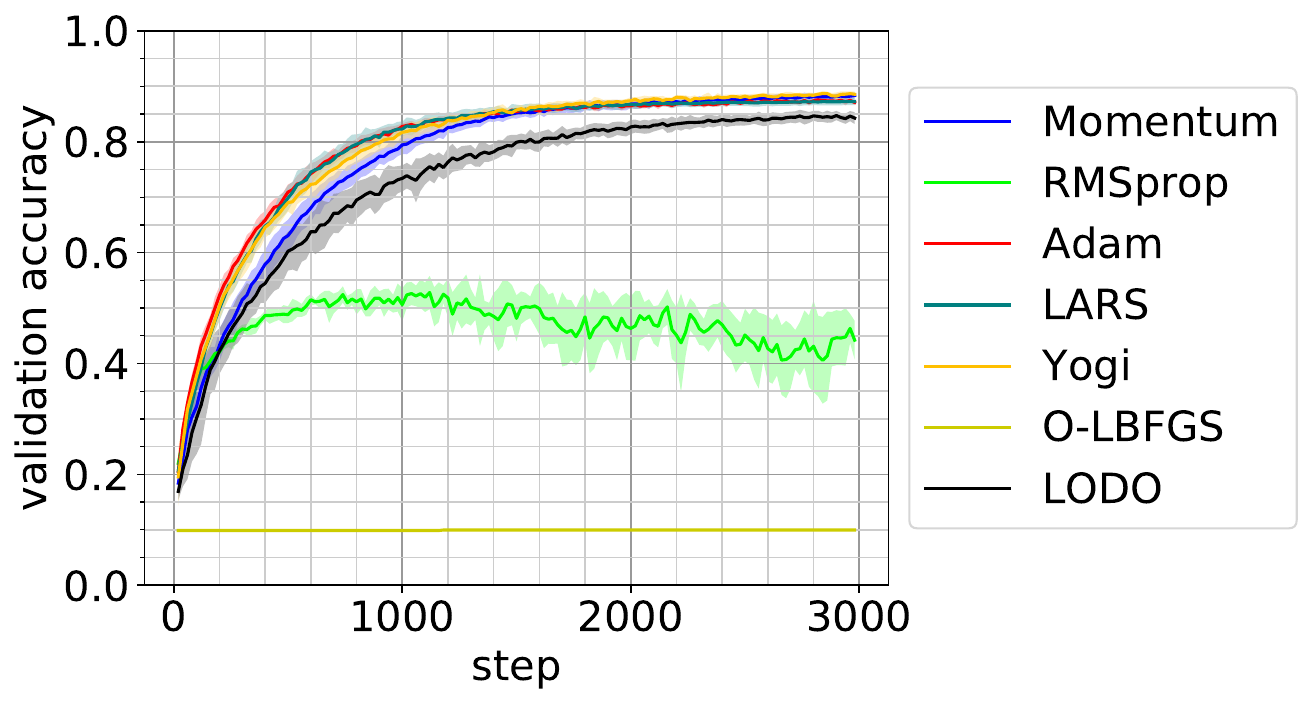}
\caption{Training loss learning curves on the Resnet-18 CIFAR10 task of Section \ref{sec:resnet18_cifar10}. \textbf{Left:} By step. \textbf{Middle:} By time. Our timing setup is described in Appendix \ref{sec:hardware}. \textbf{Right:} Validation accuracy by step.}
\label{fig:resnet_performance}
\end{figure}

\begin{table}[thb]
\centering
\caption{Training losses and test accuracies after training a Resnet-18 for 3000 steps (122.88 epochs) or 24k seconds (6.7 hours) on CIFAR10 classification task of Section \ref{sec:resnet18_cifar10} with every optimizer. Values are averaged over the last 10\% of training before the stated training milestone. \\}
\small
\begin{tabular}{lccccc} \toprule
& \multicolumn{2}{c}{Training loss} & \multicolumn{2}{c}{Test accuracy} \\
 \cmidrule(r){2-3} \cmidrule(r){4-5} 
Optimizer & 3000 steps & 24k sec. ($\sim 6.7$ h.) & 3000 steps & 24k sec. & Steps / sec. \\ \midrule
Adam & $0.326 \pm 0.008$ & $0.446 \pm 0.016$ & $0.873 \pm 0.002$ & $0.863 \pm 0.003$ & $0.0793 \pm 0.0003$ \\
Momentum & $0.386 \pm 0.010$ & $0.563 \pm 0.031$ & $0.881 \pm 0.006$ & $0.864 \pm 0.008$ & $0.0794 \pm 0.0002$ \\
RMSprop & $2.380 \pm 0.375$ & $2.077 \pm 0.191$ & $0.434 \pm 0.043$ & $0.466 \pm 0.040$ & $0.0794 \pm 0.0001$ \\
Yogi & $0.398 \pm 0.006$ & $0.552 \pm 0.022$ & $0.885 \pm 0.003$ & $0.869 \pm 0.005$ & $0.0791 \pm 0.0004$ \\
LARS \citep{https://doi.org/10.48550/arxiv.1708.03888} & $0.296 \pm 0.026$ & $0.403 \pm 0.034$ & $0.872 \pm 0.006$ & $0.864 \pm 0.007$ & $0.0788 \pm 0.0003$ \\
\midrule
LODO (ours) & $0.624 \pm 0.027$ & $1.377 \pm 0.110$ & $0.845 \pm 0.009$ & $0.674 \pm 0.035$ & $0.0316 \pm 0.0003$ \\
\bottomrule
\end{tabular}
\label{table:resnet_performance}
\end{table}

}

\section{Discussion} \label{sec:discussion}

LODO is a cross between L2O methods and quasi-Newton methods, retaining significant advantages of both classes of optimizers. With LODO, we bring ideas from each class of optimization methods to the other.

Relative to quasi-Newton methods, LODO offers advantages associated with the use of a meta-optimizer on a neural optimizer. Crucially, LODO determines its inverse Hessian estimate using all past gradients, whereas most other quasi-Newton methods use a finite history of them. This allows LODO to retain information about the inverse Hessian for much longer than other methods. This is useful if the gradients contain enough noise that useful signals can only be obtained by accumulating information from many gradient samples. Our theory further shows that the sparse linear neural network in LODO is optimal to a certain extent: it can probably represent all sparse linear neural networks smaller by a logarithmic factor---allowing for a huge class of inverse Hessians. Our image generation task demonstrates that LODO succeeds in a semi-realistic stochastic nonconvex task where other quasi-Newton optimizers diverge. Due to our use of L2O, LODO can be further developed in the design of its linear neural network, which makes it amenable to further research and refinement.

Relative to L2O, LODO offers advantages associated with the restructuring of the outer and inner loop into a single loop. Most importantly, our modification to L2O alleviates the requirement for meta-training time and the training task distribution. This is at the cost of increased inner loop unrolling truncation bias, but it takes advantage of this sacrifice by resolving the need to compute second-order gradients. LODO still inherits issues of high memory usage and slow step computation from L2O methodology though. Our theory offers some understanding of how LODO learns to optimize: the Hessian approximation error decays as learning progresses. We import the idea from quasi-Newton methods that the gradient of one parameter can affect the step for another, which comes from the presence of off-diagonal elements in the Hessian. As shown in Section \ref{sec:representations}, LODO presents an efficient way of approximating subsets of the $O(n^2)$ possible pairwise parameter interactions in $O(n\log n)$ time. Such interactions are completely ignored in the design of most L2O and more mainstream optimizers, yet our image generation task demonstrates their importance, as evidenced by the improved performance of LODO over ablated versions as well as SGD.

\section{Conclusion} \label{sec:conclusion}
Through LODO, we provide a new way of using L2O methods online without any meta-training to perform quasi-Newton optimization. We introduce the strengths and advantages of quasi-Newton methods and L2O to each other and combine them in a harmonious manner. LODO's abilities showcase the applicability of online L2O methods with nested optimizers to the training of modern neural networks. Our unique methodology serves as a stepping stone for the further development and use of L2O in quasi-Newton optimization and vice versa.

\section{Acknowledgements}
We would like to acknowledge the MIT SuperCloud and Lincoln Laboratory Supercomputing Center for providing HPC resources that have contributed to the research results reported within this paper. \\

Research was sponsored by the United States Air Force Research Laboratory and the United States Air Force Artificial Intelligence Accelerator and was accomplished under Cooperative Agreement Number FA8750-19-2-1000. The views and conclusions contained in this document are those of the authors and should not be interpreted as representing the official policies, either expressed or implied, of the United States Air Force or the U.S. Government. The U.S. Government is authorized to reproduce and distribute reprints for Government purposes notwithstanding any copyright notation herein. \\

This work was also sponsored in part by the the National Science Foundation under Cooperative Agreement PHY-2019786 (The NSF AI Institute for Artificial Intelligence and Fundamental Interactions, \url{http://iaifi.org/}).

\bibliography{main}

\begin{thebibliography}{42}
\providecommand{\natexlab}[1]{#1}
\providecommand{\url}[1]{\texttt{#1}}
\expandafter\ifx\csname urlstyle\endcsname\relax
  \providecommand{\doi}[1]{doi: #1}\else
  \providecommand{\doi}{doi: \begingroup \urlstyle{rm}\Url}\fi

\bibitem[Amid et~al.(2022)Amid, Anil, Fifty, and
  Warmuth]{https://doi.org/10.48550/arxiv.2202.00145}
Ehsan Amid, Rohan Anil, Christopher Fifty, and Manfred~K. Warmuth.
\newblock Step-size adaptation using exponentiated gradient updates, 2022.
\newblock URL \url{https://arxiv.org/abs/2202.00145}.

\bibitem[Andrychowicz et~al.(2016)Andrychowicz, Denil, Gomez, Hoffman, Pfau,
  Schaul, Shillingford, and De~Freitas]{andrychowicz2016learning}
Marcin Andrychowicz, Misha Denil, Sergio Gomez, Matthew~W Hoffman, David Pfau,
  Tom Schaul, Brendan Shillingford, and Nando De~Freitas.
\newblock Learning to learn by gradient descent by gradient descent.
\newblock In \emph{Advances in neural information processing systems}, pp.\
  3981--3989, 2016.

\bibitem[Bae et~al.(2022)Bae, Vicol, HaoChen, and
  Grosse]{https://doi.org/10.48550/arxiv.2203.00089}
Juhan Bae, Paul Vicol, Jeff~Z. HaoChen, and Roger Grosse.
\newblock Amortized proximal optimization, 2022.
\newblock URL \url{https://arxiv.org/abs/2203.00089}.

\bibitem[Baydin et~al.(2017)Baydin, Cornish, Rubio, Schmidt, and
  Wood]{https://doi.org/10.48550/arxiv.1703.04782}
Atilim~Gunes Baydin, Robert Cornish, David~Martinez Rubio, Mark Schmidt, and
  Frank Wood.
\newblock Online learning rate adaptation with hypergradient descent.
\newblock 2017.
\newblock \doi{10.48550/ARXIV.1703.04782}.
\newblock URL \url{https://arxiv.org/abs/1703.04782}.

\bibitem[Bernstein et~al.(2020)Bernstein, Vahdat, Yue, and
  Liu]{https://doi.org/10.48550/arxiv.2002.03432}
Jeremy Bernstein, Arash Vahdat, Yisong Yue, and Ming-Yu Liu.
\newblock On the distance between two neural networks and the stability of
  learning, 2020.
\newblock URL \url{https://arxiv.org/abs/2002.03432}.

\bibitem[Broyden(1970)]{10.1093/imamat/6.1.76}
C.~G. Broyden.
\newblock {The Convergence of a Class of Double-rank Minimization Algorithms 1.
  General Considerations}.
\newblock \emph{IMA Journal of Applied Mathematics}, 6\penalty0 (1):\penalty0
  76--90, 03 1970.
\newblock ISSN 0272-4960.
\newblock \doi{10.1093/imamat/6.1.76}.
\newblock URL \url{https://doi.org/10.1093/imamat/6.1.76}.

\bibitem[Chen et~al.(2021)Chen, Chen, Chen, Heaton, Liu, Wang, and
  Yin]{https://doi.org/10.48550/arxiv.2103.12828}
Tianlong Chen, Xiaohan Chen, Wuyang Chen, Howard Heaton, Jialin Liu, Zhangyang
  Wang, and Wotao Yin.
\newblock Learning to optimize: A primer and a benchmark, 2021.
\newblock URL \url{https://arxiv.org/abs/2103.12828}.

\bibitem[George et~al.(2018)George, Laurent, Bouthillier, Ballas, and
  Vincent]{George2018FastAN}
Thomas George, C{\'e}sar Laurent, Xavier Bouthillier, Nicolas Ballas, and
  Pascal Vincent.
\newblock Fast approximate natural gradient descent in a kronecker-factored
  eigenbasis.
\newblock In \emph{NeurIPS}, 2018.

\bibitem[Goldfarb et~al.(2020)Goldfarb, Ren, and
  Bahamou]{goldfarb2020practical}
Donald Goldfarb, Yi~Ren, and Achraf Bahamou.
\newblock Practical quasi-newton methods for training deep neural networks.
\newblock \emph{Advances in Neural Information Processing Systems},
  33:\penalty0 2386--2396, 2020.

\bibitem[He et~al.(2016)He, Zhang, Ren, and Sun]{he2016deep}
Kaiming He, Xiangyu Zhang, Shaoqing Ren, and Jian Sun.
\newblock Deep residual learning for image recognition.
\newblock In \emph{Proceedings of the IEEE conference on computer vision and
  pattern recognition}, pp.\  770--778, 2016.

\bibitem[Hinton et~al.(2012)Hinton, Srivastava, and Swersky]{rmsprop}
Geoffrey Hinton, Nitish Srivastava, and Kevin Swersky.
\newblock Neural networks for machine learning - lecture 6e.
\newblock
  \url{http://www.cs.toronto.edu/~tijmen/csc321/slides/lecture_slides_lec6.pdf},
  2012.
\newblock Accessed: {2022–08-28}.

\bibitem[Jain \& Kar(2017)Jain and Kar]{MAL-058}
Prateek Jain and Purushottam Kar.
\newblock Non-convex optimization for machine learning.
\newblock \emph{Foundations and Trends® in Machine Learning}, 10\penalty0
  (3-4):\penalty0 142--363, 2017.
\newblock ISSN 1935-8237.
\newblock \doi{10.1561/2200000058}.
\newblock URL \url{http://dx.doi.org/10.1561/2200000058}.

\bibitem[Jing et~al.(2017)Jing, Shen, Dubcek, Peurifoy, Skirlo, LeCun, Tegmark,
  and Solja{\v{c}}i{\'c}]{pmlr-v70-jing17a}
Li~Jing, Yichen Shen, Tena Dubcek, John Peurifoy, Scott Skirlo, Yann LeCun, Max
  Tegmark, and Marin Solja{\v{c}}i{\'c}.
\newblock Tunable efficient unitary neural networks ({EUNN}) and their
  application to {RNN}s.
\newblock In Doina Precup and Yee~Whye Teh (eds.), \emph{Proceedings of the
  34th International Conference on Machine Learning}, volume~70 of
  \emph{Proceedings of Machine Learning Research}, pp.\  1733--1741. PMLR,
  06--11 Aug 2017.
\newblock URL \url{https://proceedings.mlr.press/v70/jing17a.html}.

\bibitem[Kingma \& Ba(2014)Kingma and Ba]{kingma2014adam}
Diederik~P Kingma and Jimmy Ba.
\newblock Adam: A method for stochastic optimization.
\newblock \emph{arXiv preprint arXiv:1412.6980}, 2014.

\bibitem[Klambauer et~al.(2017)Klambauer, Unterthiner, Mayr, and
  Hochreiter]{10.5555/3294771.3294864}
G\"{u}nter Klambauer, Thomas Unterthiner, Andreas Mayr, and Sepp Hochreiter.
\newblock Self-normalizing neural networks.
\newblock In \emph{Proceedings of the 31st International Conference on Neural
  Information Processing Systems}, NIPS'17, pp.\  972–981, Red Hook, NY, USA,
  2017. Curran Associates Inc.
\newblock ISBN 9781510860964.

\bibitem[Konstantinova \& Kravchuk(2022)Konstantinova and
  Kravchuk]{https://doi.org/10.48550/arxiv.2204.03153}
Elena~V. Konstantinova and Artem Kravchuk.
\newblock Spectrum of the transposition graph, 2022.
\newblock URL \url{https://arxiv.org/abs/2204.03153}.

\bibitem[Krizhevsky et~al.(2009)Krizhevsky, Hinton,
  et~al.]{krizhevsky2009learning}
Alex Krizhevsky, Geoffrey Hinton, et~al.
\newblock Learning multiple layers of features from tiny images.
\newblock 2009.

\bibitem[Laurent \& von Brecht(2018)Laurent and von
  Brecht]{pmlr-v80-laurent18a}
Thomas Laurent and James von Brecht.
\newblock Deep linear networks with arbitrary loss: All local minima are
  global.
\newblock In Jennifer Dy and Andreas Krause (eds.), \emph{Proceedings of the
  35th International Conference on Machine Learning}, volume~80 of
  \emph{Proceedings of Machine Learning Research}, pp.\  2902--2907. PMLR,
  10--15 Jul 2018.
\newblock URL \url{https://proceedings.mlr.press/v80/laurent18a.html}.

\bibitem[Lecun et~al.(1998)Lecun, Bottou, Bengio, and Haffner]{726791}
Y.~Lecun, L.~Bottou, Y.~Bengio, and P.~Haffner.
\newblock Gradient-based learning applied to document recognition.
\newblock \emph{Proceedings of the IEEE}, 86\penalty0 (11):\penalty0
  2278--2324, 1998.
\newblock \doi{10.1109/5.726791}.

\bibitem[Levenberg(1944)]{levenberg1944method}
Kenneth Levenberg.
\newblock A method for the solution of certain non-linear problems in least
  squares.
\newblock \emph{Quarterly of applied mathematics}, 2\penalty0 (2):\penalty0
  164--168, 1944.

\bibitem[Li \& Malik(2016)Li and Malik]{li2016learning}
Ke~Li and Jitendra Malik.
\newblock Learning to optimize.
\newblock \emph{arXiv preprint arXiv:1606.01885}, 2016.

\bibitem[Li \& Malik(2017)Li and Malik]{li2017learning}
Ke~Li and Jitendra Malik.
\newblock Learning to optimize neural nets.
\newblock \emph{arXiv preprint arXiv:1703.00441}, 2017.

\bibitem[Liu et~al.(2018)Liu, Lehman, Molino, Such, Frank, Sergeev, and
  Yosinski]{DBLP:conf/nips/LiuLMSFSY18}
Rosanne Liu, Joel Lehman, Piero Molino, Felipe~Petroski Such, Eric Frank, Alex
  Sergeev, and Jason Yosinski.
\newblock An intriguing failing of convolutional neural networks and the
  coordconv solution.
\newblock In Samy Bengio, Hanna~M. Wallach, Hugo Larochelle, Kristen Grauman,
  Nicol{\`{o}} Cesa{-}Bianchi, and Roman Garnett (eds.), \emph{Advances in
  Neural Information Processing Systems 31: Annual Conference on Neural
  Information Processing Systems 2018, NeurIPS 2018, December 3-8, 2018,
  Montr{\'{e}}al, Canada}, pp.\  9628--9639, 2018.
\newblock URL
  \url{https://proceedings.neurips.cc/paper/2018/hash/60106888f8977b71e1f15db7bc9a88d1-Abstract.html}.

\bibitem[Lv et~al.(2017)Lv, Jiang, and Li]{pmlr-v70-lv17a}
Kaifeng Lv, Shunhua Jiang, and Jian Li.
\newblock Learning gradient descent: Better generalization and longer horizons.
\newblock In Doina Precup and Yee~Whye Teh (eds.), \emph{Proceedings of the
  34th International Conference on Machine Learning}, volume~70 of
  \emph{Proceedings of Machine Learning Research}, pp.\  2247--2255. PMLR,
  06--11 Aug 2017.
\newblock URL \url{https://proceedings.mlr.press/v70/lv17a.html}.

\bibitem[Martens \& Grosse(2015{\natexlab{a}})Martens and
  Grosse]{https://doi.org/10.48550/arxiv.1503.05671}
James Martens and Roger Grosse.
\newblock Optimizing neural networks with kronecker-factored approximate
  curvature, 2015{\natexlab{a}}.
\newblock URL \url{https://arxiv.org/abs/1503.05671}.

\bibitem[Martens \& Grosse(2015{\natexlab{b}})Martens and
  Grosse]{martens2015optimizing}
James Martens and Roger Grosse.
\newblock Optimizing neural networks with kronecker-factored approximate
  curvature.
\newblock In \emph{International conference on machine learning}, pp.\
  2408--2417. PMLR, 2015{\natexlab{b}}.

\bibitem[Martens \& Grosse(2015{\natexlab{c}})Martens and
  Grosse]{pmlr-v37-martens15}
James Martens and Roger Grosse.
\newblock Optimizing neural networks with kronecker-factored approximate
  curvature.
\newblock In Francis Bach and David Blei (eds.), \emph{Proceedings of the 32nd
  International Conference on Machine Learning}, volume~37 of \emph{Proceedings
  of Machine Learning Research}, pp.\  2408--2417, Lille, France, 07--09 Jul
  2015{\natexlab{c}}. PMLR.
\newblock URL \url{https://proceedings.mlr.press/v37/martens15.html}.

\bibitem[Metz et~al.(2018)Metz, Maheswaranathan, Nixon, Freeman, and
  Sohl-Dickstein]{https://doi.org/10.48550/arxiv.1810.10180}
Luke Metz, Niru Maheswaranathan, Jeremy Nixon, C.~Daniel Freeman, and Jascha
  Sohl-Dickstein.
\newblock Understanding and correcting pathologies in the training of learned
  optimizers, 2018.
\newblock URL \url{https://arxiv.org/abs/1810.10180}.

\bibitem[Metz et~al.(2019)Metz, Maheswaranathan, Nixon, Freeman, and
  Sohl-Dickstein]{pmlr-v97-metz19a}
Luke Metz, Niru Maheswaranathan, Jeremy Nixon, Daniel Freeman, and Jascha
  Sohl-Dickstein.
\newblock Understanding and correcting pathologies in the training of learned
  optimizers.
\newblock In \emph{Proceedings of the 36th International Conference on Machine
  Learning}, ICML '19, pp.\  4556--4565, Long Beach, CA, US, 2019.

\bibitem[Metz et~al.(2020)Metz, Maheswaranathan, Freeman, Poole, and
  Sohl-Dickstein]{https://doi.org/10.48550/arxiv.2009.11243}
Luke Metz, Niru Maheswaranathan, C.~Daniel Freeman, Ben Poole, and Jascha
  Sohl-Dickstein.
\newblock Tasks, stability, architecture, and compute: Training more effective
  learned optimizers, and using them to train themselves, 2020.
\newblock URL \url{https://arxiv.org/abs/2009.11243}.

\bibitem[Moskovitz et~al.(2019)Moskovitz, Wang, Lan, Kapoor, Miconi, Yosinski,
  and Rawal]{https://doi.org/10.48550/arxiv.1910.08461}
Ted Moskovitz, Rui Wang, Janice Lan, Sanyam Kapoor, Thomas Miconi, Jason
  Yosinski, and Aditya Rawal.
\newblock First-order preconditioning via hypergradient descent, 2019.
\newblock URL \url{https://arxiv.org/abs/1910.08461}.

\bibitem[Nocedal \& Wright(1999)Nocedal and Wright]{LBFGS}
Jorge Nocedal and Stephen~J. Wright (eds.).
\newblock \emph{Large-Scale Quasi-Newton and Partially Separable Optimization},
  pp.\  222--249.
\newblock Springer New York, New York, NY, 1999.
\newblock ISBN 978-0-387-22742-9.
\newblock \doi{10.1007/0-387-22742-3_9}.
\newblock URL \url{https://doi.org/10.1007/0-387-22742-3_9}.

\bibitem[Oord et~al.(2016)Oord, Kalchbrenner, Vinyals, Espeholt, Graves, and
  Kavukcuoglu]{10.5555/3157382.3157633}
A\"{a}ron van~den Oord, Nal Kalchbrenner, Oriol Vinyals, Lasse Espeholt, Alex
  Graves, and Koray Kavukcuoglu.
\newblock Conditional image generation with pixelcnn decoders.
\newblock In \emph{Proceedings of the 30th International Conference on Neural
  Information Processing Systems}, NIPS'16, pp.\  4797–4805, Red Hook, NY,
  USA, 2016. Curran Associates Inc.
\newblock ISBN 9781510838819.

\bibitem[Park \& Oliva(2019)Park and Oliva]{park2019meta}
Eunbyung Park and Junier~B Oliva.
\newblock Meta-curvature.
\newblock \emph{Advances in Neural Information Processing Systems}, 32, 2019.

\bibitem[Parker-Holder et~al.(2020)Parker-Holder, Metz, Resnick, Hu, Lerer,
  Letcher, Peysakhovich, Pacchiano, and
  Foerster]{https://doi.org/10.48550/arxiv.2011.06505}
Jack Parker-Holder, Luke Metz, Cinjon Resnick, Hengyuan Hu, Adam Lerer,
  Alistair Letcher, Alex Peysakhovich, Aldo Pacchiano, and Jakob Foerster.
\newblock Ridge rider: Finding diverse solutions by following eigenvectors of
  the hessian, 2020.
\newblock URL \url{https://arxiv.org/abs/2011.06505}.

\bibitem[Rosenblatt(1958)]{rosenblatt1958perceptron}
Frank Rosenblatt.
\newblock The perceptron: a probabilistic model for information storage and
  organization in the brain.
\newblock \emph{Psychological review}, 65\penalty0 (6):\penalty0 386, 1958.

\bibitem[Schraudolph et~al.(2007)Schraudolph, Yu, and
  Günter]{pmlr-v2-schraudolph07a}
Nicol~N. Schraudolph, Jin Yu, and Simon Günter.
\newblock A stochastic quasi-newton method for online convex optimization.
\newblock In Marina Meila and Xiaotong Shen (eds.), \emph{Proceedings of the
  Eleventh International Conference on Artificial Intelligence and Statistics},
  volume~2 of \emph{Proceedings of Machine Learning Research}, pp.\  436--443,
  San Juan, Puerto Rico, 21--24 Mar 2007. PMLR.
\newblock URL \url{https://proceedings.mlr.press/v2/schraudolph07a.html}.

\bibitem[Sun et~al.(2019{\natexlab{a}})Sun, Cao, Zhu, and
  Zhao]{https://doi.org/10.48550/arxiv.1906.06821}
Shiliang Sun, Zehui Cao, Han Zhu, and Jing Zhao.
\newblock A survey of optimization methods from a machine learning perspective,
  2019{\natexlab{a}}.
\newblock URL \url{https://arxiv.org/abs/1906.06821}.

\bibitem[Sun et~al.(2019{\natexlab{b}})Sun, Cao, Zhu, and Zhao]{sun2019survey}
Shiliang Sun, Zehui Cao, Han Zhu, and Jing Zhao.
\newblock A survey of optimization methods from a machine learning perspective.
\newblock \emph{IEEE transactions on cybernetics}, 50\penalty0 (8):\penalty0
  3668--3681, 2019{\natexlab{b}}.

\bibitem[Yao et~al.(2021)Yao, Gholami, Shen, Mustafa, Keutzer, and
  Mahoney]{yao2021adahessian}
Zhewei Yao, Amir Gholami, Sheng Shen, Mustafa Mustafa, Kurt Keutzer, and
  Michael Mahoney.
\newblock Adahessian: An adaptive second order optimizer for machine learning.
\newblock In \emph{proceedings of the AAAI conference on artificial
  intelligence}, volume~35, pp.\  10665--10673, 2021.

\bibitem[You et~al.(2017)You, Gitman, and
  Ginsburg]{https://doi.org/10.48550/arxiv.1708.03888}
Yang You, Igor Gitman, and Boris Ginsburg.
\newblock Large batch training of convolutional networks, 2017.
\newblock URL \url{https://arxiv.org/abs/1708.03888}.

\bibitem[Zaheer et~al.(2018)Zaheer, Reddi, Sachan, Kale, and
  Kumar]{10.5555/3327546.3327647}
Manzil Zaheer, Sashank~J. Reddi, Devendra Sachan, Satyen Kale, and Sanjiv
  Kumar.
\newblock Adaptive methods for nonconvex optimization.
\newblock In \emph{Proceedings of the 32nd International Conference on Neural
  Information Processing Systems}, NIPS'18, pp.\  9815–9825, Red Hook, NY,
  USA, 2018. Curran Associates Inc.

\end{thebibliography}
\bibliographystyle{tmlr}

\appendix

\section{Elaboration on Hessian Learning Dynamics} \label{sec:learning_dynamics_proof}

\subsection{Derivation of Training Dynamics} \label{sec:learning_dynamics_proof_1}
This section gives a derivation of the result that under the problem setup of Section \ref{sec:learning_dynamics}, LODO follows the Hessian learning dynamics
\begin{gather}
\mA_{t+1} = \mA_t - \alpha \mH \vb_{t+1} \vb_t^T \mH^2 \tag{\ref{eq:unapproximated1}} \\
\vb_{t+1} = \mA_t\vb_t - \vs_t,\tag{\ref{eq:unapproximated2}}
\end{gather}
where $\mA_t = \mI-\mG(\vtheta_t)\mH$ and $\vb_t = \vx_t-\vx^*_t$ as long as $\mG(\vtheta_t)$ is parameterized as a dense matrix filled with elements of $\vtheta_t$, and no momentum is used.

We first let $\vb_t$ be $\vx_t-\vx^*_t$. Following Equation \ref{eq:loss}, the loss at time $t$ is then
\begin{align}
\ell_t =& \frac{1}{2}\vb_t^T\mH\vb_t.
\end{align}
The gradient is then computed to be
\begin{align}
\frac{\text{d} \ell}{\text{d} \vx_t} =& \mH\vb_t.
\end{align}
The step taken then produces the next parameters:
\begin{align}
\vx_{t+1} =& \vx_t - \mG(\vtheta_t)\mH\vb_t.
\end{align}
Subtracting $\vx^*_{t+1}=\vx^*_t + \vs_t$, we get the recurrence for $\vb_t$,
\begin{align}
\vx_{t+1} - \vx^*_{t+1} =& \vx_t - \vx^*_t - \vs_t - \mG(\vtheta_t)\mH\vb_t \\
\vb_{t+1} =& \vb_t - \mG(\vtheta_t)\mH\vb_t - \vs_t \\
=& (\mI - \mG(\vtheta_t)\mH)\vb_t - \vs_t \\
=& \mA_t\vb_t - \vs_t.\tag{\ref{eq:unapproximated2}}
\end{align}
The loss at time $t+1$ is computed to be
\begin{align}
\ell_{t+1} =& \frac{1}{2}\vb_{t+1}^T\mH\vb_{t+1} \\
=& \frac{1}{2}(\mA_t\vb_t - \vs_t)^T\mH(\mA_t\vb_t - \vs_t) \\
=& \frac{1}{2}((\mI - \mG(\vtheta_t)\mH)\vb_t - \vs_t)^T\mH((\mI - \mG(\vtheta_t)\mH)\vb_t - \vs_t).
\end{align}
LODO also computes a step of $\vtheta_t$ using the loss on the next step. Since the elements of $\vtheta_t$ are just a rearrangement of the elements of $\mG(\vtheta_t)$ in our derivation, an update of $\vtheta_t$ can be treated instead like an update of $\mG(\vtheta_t)$. The gradient of $\ell_{t+1}$ with respect to $\mG(\vtheta_t)$ is then computed to be
\begin{align}
\frac{\text{d} \ell_{t+1}}{\text{d} \mG(\vtheta_t)} =& -\mH((\mI - \mG(\vtheta_t)\mH)\vb_t - \vs_t)\vb_t^T\mH \\
=& -\mH(\mA_t\vb_t - \vs_t)\vb_t^T\mH \\
=& -\mH\vb_{t+1}\vb_t^T\mH
\end{align}
and the step of $\mG(\vtheta_t)$ is
\begin{align}
\mG(\vtheta_{t+1}) =& \mG(\vtheta_t) + \alpha\mH\vb_{t+1}\vb_t^T\mH
\end{align}
resulting in the recurrence for $\mA_t$:
\begin{align}
\mA_{t+1} =& \mI - \mG(\vtheta_{t+1})\mH \\
=& \mI - (\mG(\vtheta_t) + \alpha\mH\vb_{t+1}\vb_t^T\mH)\mH \\
=& \mA_t - \alpha\mH\vb_{t+1}\vb_t^T\mH^2.\tag{\ref{eq:unapproximated1}}
\end{align}

\subsection{Validity of Approximation Argument} \label{sec:learning_dynamics_proof_2}
This section gives justification for the approximation in Section \ref{sec:learning_dynamics} of the long term trajectory of the recurrence
\begin{gather}
\mA_{t+1} = \mA_t - \alpha\mH\vb_{t+1}\vb_t^T\mH^2 \tag{\ref{eq:unapproximated1}} \\
\vb_{t+1} = \mA_t\vb_t - \vs_t \tag{\ref{eq:unapproximated2}}
\end{gather}
by replacing with
\begin{gather}
\mA'_{t+1} = \mA'_t - \alpha\mH\vb'_{t+1}{\vb'}_t^T\mH^2 \\
\vb'_{t+1} = \mA'_{t_0}\vb'_t - \vs_t \label{eq:approximate_dynamics_b}
\end{gather}
when $\alpha$ is small and the initial conditions at $t_0$ are the same: $\mA_{t_0}=\mA'_{t_0}$ and $\vb_{t_0}=\vb'_{t_0}=0$. We will work in the bounded noise case $||\vs_t||_2 < \infty$, where $||\vb'_t||_2$ is upper bounded by some $||\mA'_{t_0}||_2$ dependent constant $b_\text{max}$ due to exponential decay in Equation \ref{eq:approximate_dynamics_b}. In the case where the noise is not bounded, a probabilistic analysis can be done instead, though we do not provide one.

To justify this approximation, we prove that the spectral norm of long term deviation corrected for learning rate is small over short distances $r$, in the following theorem:
\begin{theorem}
\begin{align}
\lim_{r \to 0} \lim_{\alpha \to 0} \frac{1}{r}||\mA_{t_0+\lfloor r/\alpha\rfloor} - \mA'_{t_0+\lfloor r/\alpha\rfloor}||_2 =& 0. \label{eq:approx_error}
\end{align}
\end{theorem}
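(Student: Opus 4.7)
The plan is to exploit a slow/fast time-scale separation: the slow variable $\mA_t$ moves only by $O(\alpha)$ per step, while the fast variable $\vb_t$ relaxes geometrically on an $O(1)$ time scale because the spectral norm $\rho := ||\mA_{t_0}||_2$ is strictly less than $1$ by the standing hypothesis of Section~\ref{sec:learning_dynamics}. Writing $\Delta\mA_k := \mA_{t_0+k} - \mA'_{t_0+k}$, $\Delta\vb_k := \vb_{t_0+k} - \vb'_{t_0+k}$, $\delta\mA_k := \mA_{t_0+k} - \mA_{t_0}$, and letting $b_\text{max}$ denote a uniform bound on $||\vb_{t_0+k}||_2$ and $||\vb'_{t_0+k}||_2$ across $k \in \{0,\dots,\lfloor r/\alpha\rfloor\}$ (guaranteed by the bounded-noise hypothesis provided the effective state operators stay contractive throughout the window), I would control $\delta\mA$, then $\Delta\vb$, then $\Delta\mA$, in that order.

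The three estimates are short. Each increment of $\mA$ in either system has spectral norm at most $\alpha\,||\mH||_2^3 b_\text{max}^2$, so $||\delta\mA_k||_2 \leq \alpha k\, C_1 \leq r C_1$ on the window, with $C_1 := ||\mH||_2^3 b_\text{max}^2$. Subtracting the two $\vb$-recurrences and using $\mA'_{t_0} = \mA_{t_0}$ gives $\Delta\vb_{k+1} = \mA_{t_0}\Delta\vb_k + \delta\mA_{t_0+k}\vb_{t_0+k}$, which unrolls to $\Delta\vb_k = \sum_{j=0}^{k-1} \mA_{t_0}^{k-1-j}\,\delta\mA_{t_0+j}\,\vb_{t_0+j}$; bounding $||\mA_{t_0}^i||_2 \leq \rho^i$ by submultiplicativity and summing a geometric series yields $||\Delta\vb_k||_2 \leq rC_1 b_\text{max}/(1-\rho) = O(r)$, uniformly in $\alpha$ and $k$. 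Subtracting the two $\mA$-recurrences and expanding the rank-one difference as $\Delta\vb_{k+1}\vb_{t_0+k}^T + \vb'_{t_0+k+1}\Delta\vb_k^T$ shows that each increment of $\Delta\mA$ has spectral norm $O(\alpha\,||\mH||_2^3\,b_\text{max}\,||\Delta\vb||_2) = O(\alpha r)$; telescoping over the $\lfloor r/\alpha\rfloor$ steps gives $||\Delta\mA_{\lfloor r/\alpha\rfloor}||_2 = O(r^2)$ uniformly in small $\alpha$, so $\tfrac{1}{r}||\Delta\mA_{\lfloor r/\alpha\rfloor}||_2 = O(r)$, and both limits vanish.

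The main obstacle I expect is a clean self-consistency argument for the $b_\text{max}$ bound: I need $||\mA_{t_0+k}||_2$ to stay below $1$ throughout the window to prevent $\vb_t$ from escaping, yet this contractivity is exactly what the slow-drift estimate is supposed to deliver. The cleanest resolution is a bootstrap: pick a target radius $(1+\rho)/2 < 1$, set $b_\text{max}$ to the steady-state bound implied by filtering bounded noise through any contraction of spectral norm $(1+\rho)/2$, and then verify that for $r$ below a threshold depending only on $\rho$, $||\mH||_2$, and the noise bound, the drift $rC_1$ keeps $||\mA_{t_0+k}||_2$ inside $(1+\rho)/2$ over the entire window, closing the loop. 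Since we are only asked for the double limit as $r\to 0$, it suffices to verify the argument for all sufficiently small $r$, and the $O(r)$ bound then makes the inner limsup over $\alpha$ bounded by a quantity that itself vanishes as $r\to 0$, yielding the claim.
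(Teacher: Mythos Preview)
Your proposal is correct and follows essentially the same approach as the paper: both proofs bound the drift $||\mA_{t_0+k}-\mA_{t_0}||_2$ by $O(\alpha k)\leq O(r)$, feed this through the contractive $\vb$-recurrence with geometric weights to get $||\vb_{t_0+k}-\vb'_{t_0+k}||_2=O(r)$, and then telescope the rank-one increment difference to obtain $||\mA-\mA'||_2=O(r^2)$, with a bootstrap to secure the uniform bound $b_{\text{max}}$. The paper's bootstrap is phrased as a self-mapping interval argument for a recursive inequality rather than your direct ``pick the target radius $(1+\rho)/2$'' formulation, but the content is the same.
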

In other words, the local movement of $\mA$ rescaled for learning rate is unaffected by our approximation when the learning rate $\alpha$ is small.

\begin{proof}
Our proof strategy is as follows:
\begin{enumerate}
    \item We will first define variables to denote bounds on the movement of $\mA$ and the approximation error in $\vb$.
    \item We will show that these variables bound each other, and then we will combine these bounds to create a single recursive bound on the movement of $\mA$.
    \item We will characterize the bound's growth and it will turn out that $\mA$ has a maximum movement speed along any trajectory of sufficiently short length.
    \item Due to the slow movement of $\mA$, we can deduce that the approximation error in $\vb$ increases at a bounded rate.
    \item Since approximation errors in $\mA$ are an accumulation of errors in $\vb$, we will show that deviation between the true and approximate $\mA$ trajectories is quadratic in the distance along the trajectory.
    \item We conclude that the approximation error vanishes for short trajectories and small learning rates.
\end{enumerate}    
\paragraph{First part.} We first define the maximum drift in $\mA$
\begin{align}
\epsilon_{\mA,t_0 + \Delta t} =& \max_{t_0 \leq \tau \leq t_0 + \Delta t} ||\mA_\tau - \mA'_{t_0}||_2
\end{align}
up to time difference $\Delta t$ for $0 \leq \Delta t \leq R/\alpha$ for some chosen small constant trajectory length $R>0$. We will pick $R$ later. We will also define the maximum error in $\vb$
\begin{align}
\epsilon_{\vb,t_0 + \Delta t} =& \max_{t_0 \leq \tau \leq t_0 + \Delta t} ||\vb_\tau - \vb'_\tau||_2
\end{align}
up to the same time.

\paragraph{Second part.} For the bound in one direction, we have that for all $\tau$ such that $t_0 \leq \tau \leq t_0 + \Delta t$,
\begin{align}
||\vb_{\tau+1} - \vb'_{\tau+1}||_2 =& ||\mA_\tau\vb_\tau - \vs_\tau - (\mA'_{t_0}\vb'_\tau - \vs_\tau)||_2 \\
=& ||\mA_\tau\vb_\tau - \mA'_{t_0}\vb'_\tau||_2 \\
\leq& ||\mA_\tau\vb_\tau - \mA'_{t_0}\vb_\tau||_2 + ||\mA'_{t_0}\vb_\tau - \mA'_{t_0}\vb'_\tau||_2 \\
\leq& ||\mA_\tau - \mA'_{t_0}||_2||\vb_\tau||_2 + ||\mA'_{t_0}||_2||\vb_\tau - \vb'_\tau||_2 \\
\leq& \epsilon_{\mA,t_0 + \Delta t}||\vb_\tau||_2 + ||\mA'_{t_0}||_2||\vb_\tau - \vb'_\tau||_2
\end{align}
using the triangle inequality and sub-multiplicativity for the spectral norm $||\cdot||_2$. This is a recurrence in $||\vb_\tau - \vb'_\tau||_2$; by induction we have that for $t_0 \leq \tau \leq t_0 + \Delta t + 1$,
\begin{align}
||\vb_\tau - \vb'_\tau||_2 \leq& \epsilon_{\mA,t_0 + \Delta t} \sum_{\tau_1 = t_0}^{\tau-1}||\mA'_{t_0}||_2^{\tau-1-\tau_1}||\vb_{\tau_1}||_2
\end{align}
such that we produce the bound
\begin{align}
\epsilon_{\vb,t_0 + \Delta t + 1} \leq& \epsilon_{\mA,t_0 + \Delta t} \max_{t_0 \leq \tau \leq t_0 + \Delta t+1} \sum_{\tau_1 = t_0}^{\tau-1}||\mA'_{t_0}||_2^{\tau-1-\tau_1}||\vb_{\tau_1}||_2 \\
\leq& \epsilon_{\mA,t_0 + \Delta t} \max_{t_0 \leq \tau \leq t_0 + \Delta t+1} \sum_{\tau_1 = t_0}^{\tau-1}||\mA'_{t_0}||_2^{\tau-1-\tau_1}(||\vb'_{\tau_1}||_2 + \epsilon_{\vb,t_0 + \Delta t}) \\
\leq& \epsilon_{\mA,t_0 + \Delta t} \sum_{\tau_1 = t_0}^{t_0 + \Delta t}||\mA'_{t_0}||_2^{t_0 + \Delta t-\tau_1}(b_\text{max} + \epsilon_{\vb,t_0 + \Delta t}) \\
\leq& \epsilon_{\mA,t_0 + \Delta t} \frac{\epsilon_{\vb,t_0 + \Delta t + 1} + b_\text{max}}{1-||\mA'_{t_0}||_2} \\
\epsilon_{\vb,t_0 + \Delta t + 1} \leq& \frac{\epsilon_{\mA,t_0 + \Delta t} b_\text{max}}{1-||\mA'_{t_0}||_2 - \epsilon_{\mA,t_0 + \Delta t}}. \label{eq:forward_bound}
\end{align}
Now, we show a reverse bound: for all $\tau$ such that $t_0 \leq \tau \leq t_0 + \Delta t$, we have
\begin{align}
||\mA_{\tau+1} - \mA'_{t_0}||_2 =& ||\mA_\tau - \alpha\mH\vb_{\tau+1}\vb_\tau^T\mH^2 - \mA'_{t_0}||_2 \\
\leq& ||\mA_\tau - \mA'_{t_0}||_2 + \alpha||\mH||_2^3||\vb_\tau||_2||\vb_{\tau+1}||_2 \\
\leq& ||\mA_\tau - \mA'_{t_0}||_2 + \alpha||\mH||_2^3(||\vb'_\tau||_2+\epsilon_{\vb,t_0 + \Delta t + 1})(||\vb'_{\tau+1}||_2 + \epsilon_{\vb,t_0 + \Delta t + 1}) \\
\leq& ||\mA_\tau - \mA'_{t_0}||_2 + \alpha||\mH||_2^3(b_\text{max}+\epsilon_{\vb,t_0 + \Delta t + 1})^2
\end{align}
By induction we have for $t_0 \leq \tau \leq t_0 + \Delta t + 1$,
\begin{align}
||\mA_\tau - \mA'_{t_0}||_2 \leq& \alpha||\mH||_2^3(\tau-t_0)(b_\text{max}+\epsilon_{\vb,t_0 + \Delta t + 1})^2
\end{align}
such that we produce the reverse bound
\begin{align}
\epsilon_{\mA,t_0 + \Delta t+1} \leq& \alpha||\mH||_2^3(\Delta t+1)(b_\text{max}+\epsilon_{\vb,t_0 + \Delta t + 1})^2. \label{eq:backward_bound}
\end{align}
\paragraph{Third part.} Substituting the bound in Equation \ref{eq:forward_bound} into the bound in Equation \ref{eq:backward_bound}, we produce the recurrence
\begin{align}
\epsilon_{\mA,t_0 + \Delta t+1} \leq& \alpha||\mH||_2^3b_\text{max}^2(\Delta t+1)\left(1+\frac{\epsilon_{\mA,t_0 + \Delta t}}{1-||\mA'_{t_0}||_2 - \epsilon_{\mA,t_0 + \Delta t}}\right)^2 \\
=& \alpha||\mH||_2^3b_\text{max}^2(\Delta t+1)\left(\frac{1-||\mA'_{t_0}||_2}{1-||\mA'_{t_0}||_2 - \epsilon_{\mA,t_0 + \Delta t}}\right)^2 \\
=& f(\epsilon_{\mA,t_0 + \Delta t}).
\end{align}
where
\begin{align}
f(x) =& \alpha||\mH||_2^3 b_\text{max}^2(\Delta t+1)\left(\frac{1-||\mA'_{t_0}||_2}{1-||\mA'_{t_0}||_2 - x}\right)^2.
\end{align}
To bound the movement of $\mA$, we must use the fact that when
\begin{align}
0 \leq \Delta t \leq \frac{4}{27}\frac{1-||\mA'_{t_0}||_2}{\alpha||\mH||_2^3 b_\text{max}^2} - 1 \label{eq:small_drift_condition}
\end{align}
the function $f$ maps the interval
\begin{align}
I_{\Delta t}=\left[0,\frac{9}{4}\alpha||\mH||_2^3 b_\text{max}^2(\Delta t+1)\right] \subseteq \left[0,\frac{1}{3}(1-||\mA'_{t_0}||_2)\right]
\end{align}
to a subset of itself. Since at $\Delta t=0$ we have $\epsilon_{\mA,t_0 + \Delta t} = 0 \in I_{\Delta t}$, and we also have $I_{\Delta t} \subseteq I_{\Delta t+1}$, we may deduce by induction on $\Delta t$ that $\epsilon_{\mA,t_0 + \Delta t} \in I_{\Delta t}$ as long as Equation \ref{eq:small_drift_condition} holds, and thus there is a bound
\begin{align}
\epsilon_{\mA,t_0 + \Delta t} \leq \frac{9}{4}\alpha||\mH||_2^3 b_\text{max}^2(\Delta t+1) \leq \frac{1}{3}(1-||\mA'_{t_0}||_2) \label{eq:small_drift_consequence}
\end{align}
on the movement speed of $\mA$ as long as Equation \ref{eq:small_drift_condition} holds.

\paragraph{Fourth part.} Note that we have assumed that $0 \leq \Delta t \leq R/\alpha$ for some constant $R$ which we have not yet picked. By choosing
\begin{align}
R \leq \frac{4}{27}\frac{1-||\mA'_{t_0}||_2}{||\mH||_2^3b_\text{max}^2} - \alpha
\end{align}
we may always guarantee Equation \ref{eq:small_drift_condition}, which implies Equation \ref{eq:small_drift_consequence}. Then when Equation \ref{eq:small_drift_consequence} is substituted into Equation \ref{eq:forward_bound}, we create a small bound on the approximation error in $\vb$ which begins at zero and increases with time,
\begin{align}
\epsilon_{\vb,t_0 + \Delta t + 1} \leq& \frac{\frac{9}{4}\alpha||\mH||_2^3 b_\text{max}^3 (\Delta t+1)}{1 - ||\mA'_{t_0}||_2 - \frac{9}{4}\alpha||\mH||_2^3 b_\text{max}^2 (\Delta t+1)} \\
\leq& \frac{\alpha b_\text{max}(\Delta t+1)}{3R - \alpha(\Delta t+1)}
\end{align}
for $0 \leq \Delta t \leq R/\alpha$. This also holds trivially for $\Delta t=-1 \implies \epsilon_{\vb,t_0 + \Delta t + 1}=0$, so we may re-index to have
\begin{align}
\epsilon_{\vb,t_0 + \Delta t} \leq& \frac{\alpha b_\text{max}\Delta t}{3R - \alpha\Delta t} \label{eq:nonlinear_error_bound}
\end{align}
for $0 \leq \Delta t \leq R/\alpha + 1$. Since the right side of Equation \ref{eq:nonlinear_error_bound} is convex in $\Delta t$ over $\Delta t \in [0, R/\alpha]$, we may bound by a linear function with the same endpoints
\begin{align}
\epsilon_{\vb,t_0 + \Delta t} \leq& \frac{\alpha b_\text{max}}{2R}\Delta t
\end{align}
for $0 \leq \Delta t \leq R/\alpha$.

\paragraph{Fifth part.} Finally, we use this bound on approximation error in $\vb$ to bound approximation error in $\mA$.
\begin{align}
||\mA_{t_0 + \Delta t}&_{ + 1} - \mA'_{t_0 + \Delta t + 1}||_2 \nonumber \\
=& ||\mA_{t_0 + \Delta t} - \alpha\mH\vb_{t_0 + \Delta t + 1}\vb_{t_0 + \Delta t}^T\mH^2 - (\mA'_{t_0 + \Delta t} - \alpha\mH\vb'_{t_0 + \Delta t + 1}{\vb'}_{t_0 + \Delta t}^T\mH^2)||_2 \\
\leq& ||\mA_{t_0 + \Delta t} - \mA'_{t_0 + \Delta t}||_2 + \alpha||\mH||^3||\vb_{t_0 + \Delta t + 1}\vb_{t_0 + \Delta t}^T - \vb'_{t_0 + \Delta t + 1}{\vb'}_{t_0 + \Delta t}^T||_2 \\
\leq& ||\mA_{t_0 + \Delta t} - \mA'_{t_0 + \Delta t}||_2 + \alpha||\mH||^3\bigg(||\vb_{t_0 + \Delta t + 1}||_2||\vb_{t_0 + \Delta t}^T - {\vb'}_{t_0 + \Delta t}^T||_2 \nonumber \\
& \phantom{||\mA_{t_0 + \Delta t} - \mA'_{t_0 + \Delta t}||_2 + \alpha||\mH||^3\bigg(}+||\vb_{t_0 + \Delta t + 1} - \vb'_{t_0 + \Delta t + 1}||_2||{\vb'}_{t_0 + \Delta t}^T||_2\bigg) \\
\leq& ||\mA_{t_0 + \Delta t} - \mA'_{t_0 + \Delta t}||_2 + \epsilon_{\vb,t_0 + \Delta t + 1}\alpha||\mH||^3(2b_\text{max} + \epsilon_{\vb,t_0 + \Delta t + 1})
\end{align}
By induction, we find that the approximation error of $\mA$ is quadratic in time for short times $0 \leq \Delta t \leq R/\alpha$,
\begin{align}
||\mA_{t_0 + \Delta t} - \mA'_{t_0 + \Delta t}||_2 \leq& \sum_{\widetilde{\Delta t} = 0}^{\Delta t-1}\epsilon_{\vb,t_0 + \widetilde{\Delta t} + 1}\alpha||\mH||^3(2b_\text{max} + \epsilon_{\vb,t_0 + \widetilde{\Delta t} + 1}) \\
=& ||\mH||^3 b_\text{max}^2 \sum_{\widetilde{\Delta t} = 0}^{\Delta t-1} \frac{\alpha^2}{2R}(\widetilde{\Delta t} + 1)\left(2 + \frac{\alpha}{2R}(\widetilde{\Delta t} + 1)\right) \\
\leq& ||\mH||^3 b_\text{max}^2 \sum_{\widetilde{\Delta t} = 1}^{\Delta t} \frac{\alpha^2}{2R}\Delta t\left(2 + \frac{\alpha}{2R}\Delta t\right) \\
=& ||\mH||^3 b_\text{max}^2 \frac{\alpha^2\Delta t^2}{2R}\left(2 + \frac{\alpha\Delta t}{2R}\right).
\end{align}

\paragraph{Sixth part.} Now take $\Delta t = \lfloor r/\alpha \rfloor$, which for $r \to 0$ is eventually $\leq R/\alpha$ as required. As we sought to prove, the learning rate rescaled approximation error of the local drift direction and speed goes to zero:
\begin{align}
\lim_{r \to 0} \lim_{\alpha \to 0} \frac{1}{r}||\mA_{t_0+\lfloor r/\alpha\rfloor} - \mA'_{t_0+\lfloor r/\alpha\rfloor}||_2 =& \lim_{r \to 0} \lim_{\alpha \to 0} \frac{1}{r}||\mH||^3 b_\text{max}^2 \frac{\alpha^2\lfloor r/\alpha\rfloor^2}{2R}\left(2 + \frac{\alpha\lfloor r/\alpha\rfloor}{2R}\right) \\
=& \lim_{r \to 0} ||\mH||^3 b_\text{max}^2 \frac{r}{2R}\left(2 + \frac{r}{2R}\right) \\
=& 0. \tag{\ref{eq:approx_error}}
\end{align}
\end{proof}

\section{Proof that $\mathbf{G}(\mathbf{\theta}_t)\mathbf{H}$ has a Negative Eigenvalue} \label{sec:psd}
This is true, because we can substitute $\mA=\mathbf{G}(\mathbf{\theta}_t)$ and $\mB=\mathbf{H}$ in following lemma:
\begin{restatable}{lemma}{psd} \label{thm:psd}
Let $\mA$ and $\mB$ be symmetric, full-rank $n \times n$ matrices. Let $\mA$ be positive-definite and $\mB$ have at least one negative eigenvalue. Then, the product $\mA\mB$ has at least one negative eigenvalue.
\end{restatable}
\begin{proof}
Let $\vx$ be an eigenvector of $\mB$ with negative eigenvalue $\lambda$. Then, we have
\begin{align}
\vx^T\mB\vx = \left(\mA^{-1/2}\vx\right)^T\mA^{1/2}\mB\mA^{1/2}\left(\mA^{-1/2}\vx\right) < 0
\end{align}
meaning that the symmetric matrix $\mA^{1/2}\mB\mA^{1/2}$ cannot possibly be positive-semidefinite, and must have at least one negative eigenvalue $\lambda'$ with eigenvector $\vx'$. Then, we have the eigenvalue equation
\begin{align}
\mA\mB \left(\mA^{1/2}\vx'\right) =& \mA^{1/2}\left(\mA^{1/2}\mB\mA^{1/2}\right)\vx' \\
=& \lambda'\mA^{1/2}\vx'
\end{align}
which shows that $\mA\mB$ has a negative eigenvalue $\lambda'$.
\end{proof}

\section{Proof of Representability Theorem} \label{sec:representation_proof}
This section gives a proof of the representability theorem stated in Section \ref{sec:representations}:

\representability*

\begin{proof}
Our result comes in two parts: the first shows that $\tilde{\mG}(\vtheta)$ can represent arbitrary permutations, and the second shows that we can pick these $\tilde{\mG}(\vtheta)$ permutations and interleave them with block diagonal matrices to create a deeper $\tilde{\mG}(\vtheta)$ network which manifests any desired neural network. We use the terms fanin and fanout to mean the number of incoming and outgoing weights into and out of a neuron, respectively.

\paragraph{First part.} Assume we would like to apply a given target permutation to $\tilde{n}$ elements using $\tilde{\mG}(\vtheta) = \prod_{i=1}^N \mB(\vtheta^{(i)}) \mP_i$ consisting of $N$ layers with randomly chosen permutations $\mP_i$. Our goal is to perform the target permutation given $\mP_i$ by controlling the block diagonal matrices $\mB(\vtheta^{(i)})$. The form of $\tilde{\mG}(\vtheta)$ from Section~\ref{sec:method} is
\begin{align}
\tilde{\mG}(\vtheta) = \prod_{i=1}^N \mB(\vtheta^{(i)}) \mP_i
\tag{\ref{eq:hessian_representation}}
\end{align}
which can be rewritten as
\begin{align}
\tilde{\mG}(\vtheta) =& \mQ_1\prod_{i=1}^N \mQ_i^T \mB(\vtheta^{(i)}) \mQ_i \label{eq:alternate_decomposition}, \quad \quad \mQ_N = \mP_i, \quad \mQ_{i-1} = \mP_{i-1}\mQ_i
\end{align}
with random independent uniform permutations $\mQ_i$ instead. For each block in the matrix $\mB(\vtheta^{(i)})$, we may restrict ourselves to two options: to swap or to not swap the pair of indices. The conjugation by random permutation $\mQ_i$ then shuffles these pairings, such that applying $\tilde{\mG}(\vtheta)$ is equivalent to repeatedly randomly pairing up indices for optional transposition instead, and then applying a final permutation $\mQ_1$. We will work under this new equivalent formulation, since it is more amenable to analysis.

Let us choose to apply each transposition with probability $1/2$. Then, the expected entropy of $\tilde{\mG}(\vtheta)$ given the whole sequence of pairings is at least $\log \tilde{n}! - \epsilon(N\tilde{n}/2,\tilde{n})$ under Definition~\ref{def:epsilon}. In other words, the expected KL divergence of this distribution of $\tilde{\mG}(\vtheta)$ from the uniform is at most $\epsilon(N\tilde{n}/2,\tilde{n})$. Then by Pinsker's inequality, the expected total variation distance from the uniform is then at most
\begin{align}
\sqrt{\frac{1}{2}\epsilon(N\tilde{n}/2,\tilde{n})}\,.
\end{align}
This guarantees that at most
\begin{align}
\tilde{n}!\sqrt{\frac{1}{2}\epsilon(N\tilde{n}/2,\tilde{n})}
\end{align}
possible target permutations have a probability density of zero, in expectation, which is then an upper bound on the number of inaccessible target permutations. This means the probability that all target permutations are accessible is then at least
\begin{align}
1-\tilde{n}!\sqrt{\frac{1}{2}\epsilon(N\tilde{n}/2,\tilde{n})}\,. \label{eq:permutation_representation}
\end{align}
Note that the leftmost $\mQ_1$ in Equation \ref{eq:alternate_decomposition} merely introduces a bijection between target permutations, and so does not change how many are accessible.

\paragraph{Second part.} Suppose we would like to represent $p$ target permutations using $p$ independently generated $\tilde{\mG}(\vtheta)$ networks each of depth $M$. Equation \ref{eq:permutation_representation} lower bounds the probability that each network can represent its respective permutation. Then the probability that all of the $p$ target permutations are accessible by their respective copies of $\tilde{\mG}(\vtheta)$ is union bounded by
\begin{align}
1-p\tilde{n}!\sqrt{\frac{1}{2}\epsilon(M\tilde{n}/2,\tilde{n})} \label{eq:permutations_representation}
\end{align}
where the union is over the probability that each copy fails to represent its target permutation. Given that this succeeds, we now need to chain these permutations together with block diagonal matrices to represent arbitrary neural networks $\tilde{\mF}$, with Equation \ref{eq:permutations_representation} lower bounding the probability of failure.

Suppose we are successful in representing any combination of $p$ target permutations using $p$ distinct independently generated $\tilde{\mG}(\vtheta)$ networks of depth $M$. Then, since each $\tilde{\mG}(\vtheta)$ network's final (leftmost) operation is a block diagonal matrix, applying an additional block diagonal matrix afterward does not affect the operations that $\tilde{\mG}(\vtheta)$ can represent, since the set of block diagonal matrices we use is closed under matrix multiplication. Importantly, each block matrix can be used to copy a value from one index into two or to add two values together to leave one, creating fanin or fanout in the network. Then, interleaving $p+1$ chained copies of $\tilde{\mG}(\vtheta)$ with block diagonal matrices left-multiplied for a total depth of $(p+1)M$ therefore creates an aggregate operation which can still be represented by a single $\tilde{\mG}(\vtheta)$ network of depth $(p+1)M$. This aggregate operation has up to $\tilde{n}$ arbitrary connections from input nodes to output nodes, with either all fanin at most 1 and all fanout at most $2^p$, or all fanout at most 1 and all fanin at most $2^p$. This is done by building a forest of fanin/fanout trees like illustrated on the right side of Figure \ref{fig:fanout_tree}.

\begin{figure}
    \centering
    \includegraphics[height=36mm]{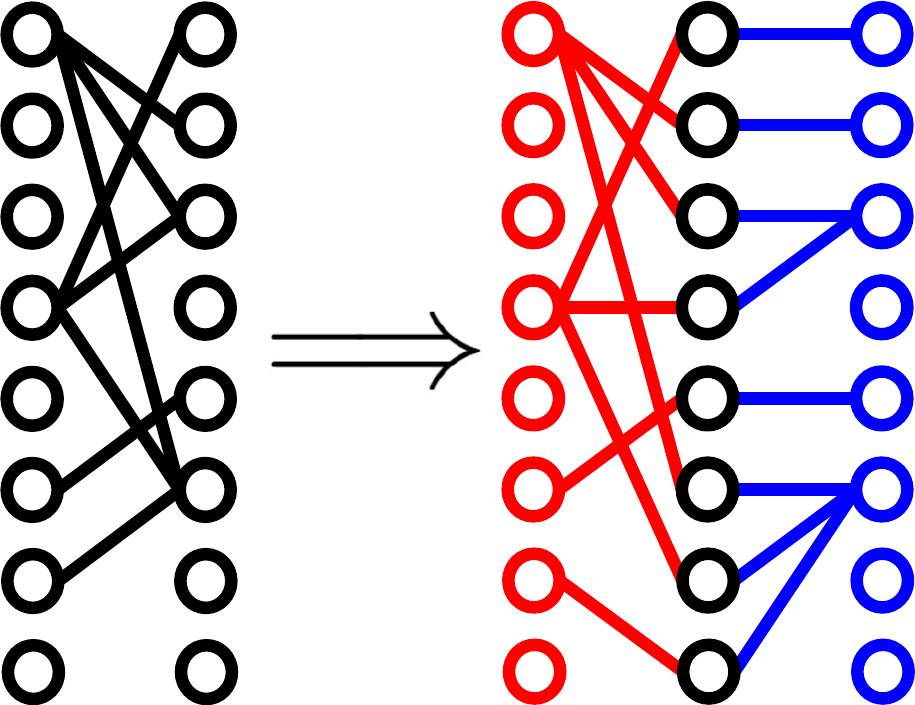}
    \hspace{15mm}
    \includegraphics[height=40mm]{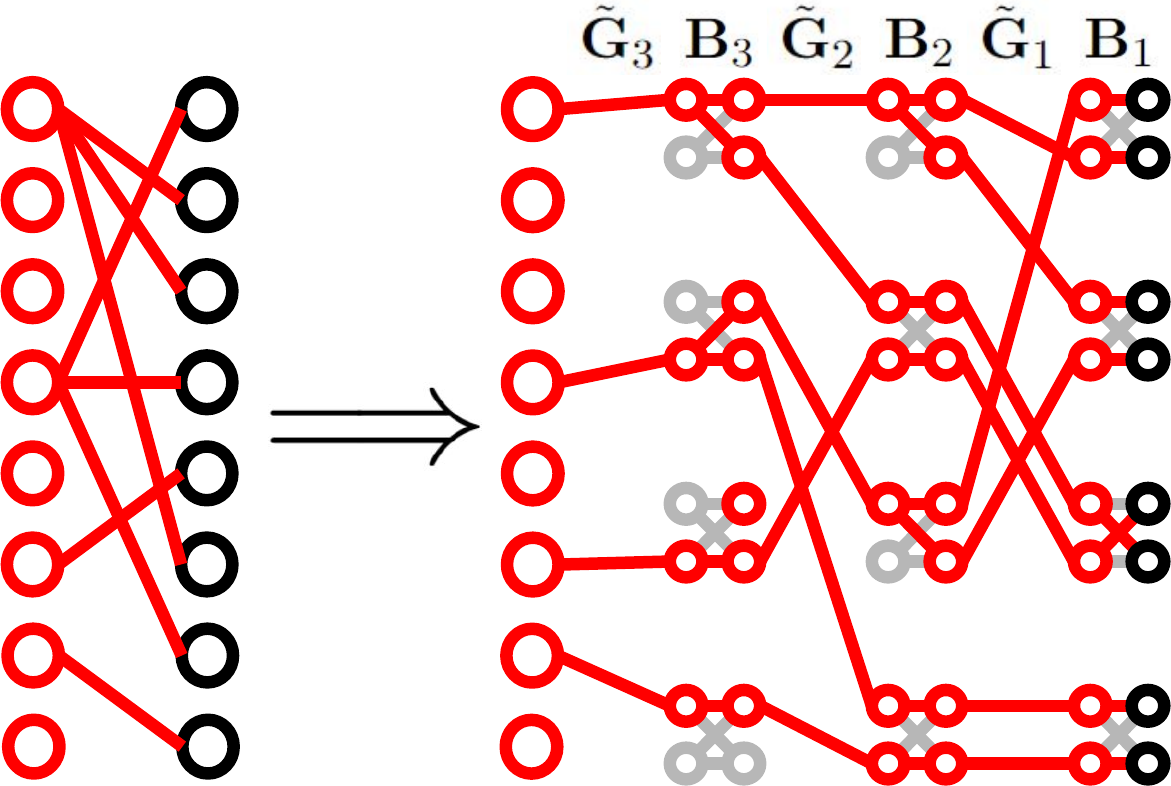}
    \caption{In this diagram, $\tilde{n}=8$ and $p=2$, for illustrative purposes. \textbf{Left:} Visual depiction of how a fanout forest followed by a fanin forest can represent arbitrary sparse connections. \textbf{Right:} Visual depiction of how $p+1$ chained copies of $\tilde{\mG}(\vtheta)$ networks left-multiplied by block matrices diagonal can manifest a forest of fanout trees. $\tilde{\mG}_i$ are permutations implemented by $\tilde{\mG}(\vtheta)$ networks allowing for arbitrary connections, and $\mB_i$ are block diagonal matrices which create the necessary fanouts. Data flows from left to right in the illustration, though successive operations are written out right to left when using mathematical notation. The largest fanout in this pattern of connections is 3, which is less than $2^p=4$; all the fanins are 1.}
    \label{fig:fanout_tree}
\end{figure}

If we compose such a $\tilde{\mG}(\vtheta)$ network of depth $(p+1)M$ with fanout up to $2^p$ together with a $\tilde{\mG}(\vtheta)$ network of depth $(p+1)M$ with fanin up to $2^p$, then we may represent any sparse matrix structure with at most $\tilde{n}$ nonzero weights and maximum fanin and fanout at most $2^p$, using a $\tilde{\mG}(\vtheta)$ network of depth $2(p+1)M$. This construction is illustrated on the left side of Figure \ref{fig:fanout_tree}. We may adjust the final (leftmost) block diagonal matrix on the fanout side to change the values of the $\tilde{n}$ arbitrarily positioned weights in the desired sparse matrix. Therefore, any sparse matrix with at most $\tilde{n}$ weights and max node indegree and outdegree at most $k$ can be represented by a $\tilde{\mG}(\vtheta)$ of depth $2(\lceil\log_2 k \rceil+1)M$.

Then, any linear neural network of depth at most $d$, at most $\tilde{n}$ weights per layer, and maximum fanin and fanout of at most $k$ can be represented by a $\tilde{\mG}(\vtheta)$ network of depth $2Md(\lceil\log_2 k \rceil+1)$, by composition of sparse matrices. The probability that all of this is successful is merely the probability that all the $p=2Md(\lceil\log_2 k \rceil+1)$ permutations can be represented, which by Equation \ref{eq:permutations_representation} is at least
\begin{align}
1-2\tilde{n}!Md(\lceil\log_2 k \rceil+1)\sqrt{\frac{1}{2}\epsilon(M\tilde{n}/2,\tilde{n})}\,.
\end{align}
Thus in summary, if we randomly generate a $\tilde{\mG}(\vtheta)$ network of depth $N=2Md(\lceil\log_2 k \rceil+1)$ for fixed constants $k$, $d$, and $M$, $\tilde{n}$ neurons per layer, and block size $f=2$, then there is a probability of at least 
\begin{align}
1-\tilde{n}!N\sqrt{\frac{1}{2}\epsilon\left(M\tilde{n}/2,\tilde{n}\right)} =& 1-\tilde{n}!N\sqrt{\frac{1}{2}\epsilon\left(\frac{\tilde{n}N}{4d(\lceil \log_2 k \rceil + 1)},\tilde{n}\right)}
\end{align}
that $\tilde{\mG}(\vtheta)$ can represent every possible linear neural network of input and output dimension $\leq\tilde{n}$, depth $d$, at most $\tilde{n}$ nonzero weights per layer, and max fanin/fanout of at most $k$.
\end{proof}

\section{Hessian Learning Local Minima (LODO can Generalize)}
\label{app:lodo_generalize}
\citet{pmlr-v80-laurent18a} gives a theorem showing that for dense linear neural networks on convex losses where each layer is at least as wide as the input or output layer, all local minima in the neural network weights are also global minima. If we simplify LODO by approximating the inverse Hessian with a full dense matrix $\mG(\vtheta_t)$, then this theorem applies to the Hessian approximation error $||\mI-\mG(\vtheta_t) \mH||_F^2$, which the rescaled error $||\mB\mD^{-1}||_F^2$ used in Section \ref{sec:learning_dynamics} is a proxy for. Thus we may expect that any inverse Hessian approximation which LODO could converge to is of similar high quality to the best possible inverse Hessian approximation.

\section{Hyperparameters} \label{sec:hyperparameters}

In every experiment, we tuned the hyperparameters of each optimizer using a genetic algorithm of 10 generations and 32 individuals per generation (16 individuals per generation for the Resnet CIFAR10 task). Each hyperparamter was rescaled using $x \mapsto \ln x$ if the hyperparameter was a learning rate and $x \mapsto 1-\ln (1-x)$ if the hyperparameter was a decay parameter, so that the genetic algorithm would operate in a more well-behaved hyperparameter space. Starting from the default hyperparameters, each generation's mean hyperparameters were added to some Gaussian noise to create mutated hyperparameters for each individual, where the standard deviation of the noise was generation-dependent and followed a specific schedule. Each individual performed a generation-dependent number of steps of optimization, also according to a schedule. The next generation's mean hyperparameters were chosen to be the mean hyperparameters of the better performing half of the previous generation, as judged by the average training loss during the last 10\% of training. We also halved all the learning rates (equiv. initial learning rates for LODO versions) after tuning for the image generation task because tests showed the loss to diverge if training time horizons were longer than 8k steps. Since LODO is a random optimizer, we used a different randomization seed for every individual. Table \ref{table:tuning_schedule} lists the parameters of the tuning schedule for each task. The tuned hyperparameters can be found in Table \ref{table:tuned_hyperparameters}.

\begin{table}[h]
\centering
\caption{Noise and step number schedules for tuning the optimizers' hyperparameters using the genetic algorithm presented in Appendix~\ref{sec:hyperparameters} \\}
\begin{tabular}{ccccccccc} \toprule
Iteration & \multicolumn{2}{c}{Noisy Quadratic Bowl} & \multicolumn{2}{c}{Rosenbrock Function} & \multicolumn{2}{c}{Image Generation} & \multicolumn{2}{c}{Resnet-18 CIFAR10} \\ \cmidrule(r){2-3} \cmidrule(r){4-5} \cmidrule(r){6-7} \cmidrule(r){8-9}
& Noise stddev & steps & Noise stddev & steps & Noise stddev & steps & Noise stddev & steps \\ \midrule
0 & 3 & 1k & 3 & 200 & 3 & 1k & 3 & 50 \\
1 & 3 & 1k & 3 & 200 & 3 & 1k & 3 & 50 \\
2 & 3 & 1k & 3 & 200 & 3 & 1k & 3 & 50 \\
3 & 3 & 1k & 2.5 & 200 & 3 & 1k & 3 & 50 \\
4 & 2 & 1.5k & 2 & 200 & 2 & 1.5k & 2 & 50 \\
5 & 1.7 & 1.5k & 1.5 & 200 & 1.7 & 1.5k & 1.7 & 50 \\
6 & 1.4 & 2k & 1 & 200 & 1.4 & 2k & 1.4 & 100 \\
7 & 1.2 & 3k & 0.75 & 200 & 1.2 & 3k & 1.2 & 200 \\
8 & 0.9 & 5k & 0.5 & 200 & 0.9 & 5k & 0.9 & 300 \\
9 & 0.6 & 8k & 0.3 & 200 & 0.6 & 8k & 0.6 & 500 \\
\bottomrule
\end{tabular}
\label{table:tuning_schedule}
\end{table}

\begin{table}[h!]
\centering
\caption{Hyperparameters used for the experiments in Section \ref{sec:experiments}, after tuning hyperparameters with a genetic algorithm as in Appendix~\ref{sec:hyperparameters}~and halving the learning rates (equiv. initial learning rates for LODO variants) for the image generation task. $\beta$ and $\beta_1$ generally represent momentum decay rates while $\beta_2$ represent variance EMA decay rates. Dashes indicate that the optimizer was not used for that experiment. \\}
\begin{tabular}{llcccc} \toprule
Optimizer & Hyperparameter & \multicolumn{4}{c}{Value} \\
 \cmidrule(r){3-6}
 & & Noisy & Rosenbrock & Image & Resnet-18 \\
 & & quadratic bowl & function & generation & CIFAR10 \\ \midrule
Adam & Learning Rate & 1.164 & 0.9704 & 0.0009554 & 0.0004295 \\
 & $\beta_1$ & 0.465 & 0.864 & 0.9323 & 0.9562 \\
 & $\beta_2$ & 0.9884 & 0.99804 & 0.99505 & 0.99814 \\
Momentum & Learning Rate & 1.394 & 0.09870 & 0.003411 & 0.009942 \\
 & $\beta$ & 0.529 & 0.9359 & 0.9640 & 0.99409 \\
RMSprop & Learning Rate & 0.449 & 0.004318 & $4.167 \times 10^{-5}$ & $9.877 \times 10^{-6}$ \\
 & $\rho$ & 0.04943 & 0.02836 & 0.002258 & 0.03338 \\
 & $\beta$ & 0.595 & 0.880 & 0.936 & 0.9394 \\
Yogi & Learning Rate & 2.169 & 0.2991 & 0.0009340 & 0.001829 \\
 & $\beta_1$ & 0.4362 & 0.9273 & 0.9319 & 0.9730 \\
 & $\beta_2$ & 0.9999723 & 0.998787 & 0.999708 & 0.9828 \\
LARS & Learning Rate & -- & -- & 0.0008552 & 0.002516 \\
 & $\beta$ & -- & -- & 0.9343 & 0.9385 \\
 & Weight Decay & -- & -- & $4.839 \times 10^{-5}$ & $5.370 \times 10^{-5}$ \\
L-BFGS & Learning Rate & 1.204 & -- & -- & -- \\
 & $\tau$ & 23002 & -- & -- & -- \\
O-LBFGS & Learning Rate & 1.050 & -- & -- & -- \\
 & $\tau$ & 36721 & -- & -- & -- \\
LODO & Meta-Learning Rate & 0.0009600 & 0.0001394 & $7.946 \times 10^{-6}$ & 0.0001134 \\
 & $\beta$ & 0.195 & 0.897 & 0.9343 & 0.9490 \\
 & Initial Learning Rate & 0.270 & 0.2946 & 0.08459 & 0.1725 \\
LODO-Diagonal & Meta-Learning Rate & -- & -- & 0.0005196 & -- \\
 & $\beta$ & -- & -- & 0.9860 & -- \\
 & Initial Learning Rate & -- & -- & 0.1325 & -- \\
LODO-Global & Meta-Learning Rate & -- & -- & $7.951 \times 10^{-5}$ & -- \\
 & $\beta$ & -- & -- & 0.9525 & -- \\
 & Initial Learning Rate & -- & -- & 0.05583 & -- \\
LODO-Residuals & Meta-Learning Rate & -- & -- & $3.829 \times 10^{-5}$ & -- \\
 & $\beta$ & -- & -- & 0.9301 & -- \\
 & Initial Learning Rate & -- & -- & 0.02134 & -- \\
LODO-SGD & Meta-Learning Rate & -- & -- & 0.0007196 & -- \\
 & $\beta$ & -- & -- & 0.9499 & -- \\
 & Initial Learning Rate & -- & -- & 0.1035 & -- \\ \bottomrule
\end{tabular}
\label{table:tuned_hyperparameters}
\end{table}

\section{Task Setup Details}

\subsection{Noisy Quadratic Bowl Task Details} \label{sec:noisy_quadratic_bowl_details}

This section fully explains the details of the setup of the noisy quadratic bowl task of Section \ref{sec:noisy_quadratic_bowl}. This 100 parameter task consists of a quadratic bowl for its loss landscape. Using a random uniform orthogonal matrix $\mU$ and a diagonal matrix $\mD$ consisting of a geometric sequence of 100 values starting at 0.001 and ending at 1, the Hessian of the quadratic bowl is set to $\mH=\mU\mD\mU^T$, and the center is set to the origin. However, whenever the loss and gradient are evaluated, the center of the quadratic bowl is perturbed by \rev{a random offset distributed as i.i.d. normal in each dimension, with mean zero and covariance $v\mI$ for some $v>0$ which defaults to 1}. The initialization for this task is set to the origin. Due to the random wandering of the center, the expected loss rises linearly over time---unless the optimizer acts to prevent this, driving the error towards a steady state distribution. The expected loss after infinitely many steps can then be taken as a measure of the quality of an optimizer. The optimal solution for this task is to select the current minimum of the quadratic bowl at every timestep (which is what the Newton method would do). Due to the movement of the minimum between steps and loss evaluations, we should still expect this strategy to a achieve nonzero loss which can be analytically calculated to be $7.412v$. Table \ref{table:noisy_quadratic_bowl_performance} shows the long term performance of LODO in comparison to other optimizers and the optimal solution of the Newton method. \rev{Table \ref{table:new_noisy_quadratic_bowl_performance} and Figure \ref{fig:noise_covariance_test_relative} show that the performance of LODO is insensitive to the noise variance, and that LODO outperforms competitors for all noise levels.}

\begin{table}[thb]
\centering
\caption{Average tracking error of the quadratic bowl minimum on the noisy quadratic bowl task of Section \ref{sec:noisy_quadratic_bowl}, with various different noise variances $v$. Values are are averaged over the last 10\% of training up to 100k steps. The theoretically best possible loss using Newton's method is also listed. The version of L-BFGS is one with stochastic modifications from \citep{pmlr-v2-schraudolph07a}, instead of the original from \citep{LBFGS}. \textbf{Upper:} Loss values. \textbf{Lower:} LODO's average inverse Hessian approximation error $\sigma = \sqrt{||\mI - \mG(\vtheta_t)\mH||_F^2/n}$. $\sigma^2$ is measured by the unbiased estimator $\frac{1}{100}\sum_{i=1}^{100} ||(\mI - \mG(\vtheta_t)\mH)\vv_i||_2^2$ with random independent unit vectors $\vv_i$. Evidently, the ability of LODO to learn the inverse Hessian does not depend on the amount of noise. \\}
\small
\begin{tabular}{lccccc} \specialrule{.15em}{.25em}{.25em}
 Noise variance $v$ & $0.01$ & $0.1$ & $1$ & $10$ & $100$ \\ \specialrule{.15em}{.05em}{.05em}
& \multicolumn{5}{c}{loss at noise variance $v$} \\
 \cmidrule(r){2-6} 
Adam \citep{kingma2014adam} & $2.03$ & $2.95$ & $15.23$ & $314.05$ & $18602.71$ \\
Momentum & $0.17$ & $1.67$ & $16.52$ & $164.95$ & $1672.73$ \\
RMSprop \citep{rmsprop} & $1.44$ & $3.20$ & $22.50$ & $1683.13$ & $28973.22$ \\
Yogi \citep{10.5555/3327546.3327647} & $0.17$ & $1.80$ & $15.51$ & $209.83$ & $7002.78$ \\
L-BFGS \citep{pmlr-v2-schraudolph07a} & $0.50$ & $4.89$ & $48.04$ & $494.04$ & $4962.88$ \\
O-LBFGS \citep{pmlr-v2-schraudolph07a} & $0.14$ & $1.40$ & $14.13$ & $141.46$ & $1405.51$ \\
\midrule
\textbf{LODO (ours)} & $\mathbf{0.09}$ & $\mathbf{0.89}$ & $\mathbf{8.94}$ & $\mathbf{89.54}$ & $\mathbf{887.94}$ \\ \midrule
Newton Method (Optimal) & $0.07$ & $0.74$ & $7.41$ & $74.12$ & $741.18$ \\ \specialrule{.15em}{.05em}{.05em}
& \multicolumn{5}{c}{$\sigma$ at noise variance $v$} \\
 \cmidrule(r){2-6} 
LODO & $0.7444$ & $0.7421$ & $0.7490$ & $0.7479$ & $0.7533$ \\  \specialrule{.15em}{.25em}{.25em}
\end{tabular}
\label{table:new_noisy_quadratic_bowl_performance}
\end{table}

\begin{figure}[thb]
\centering
\includegraphics[width=0.345\columnwidth]{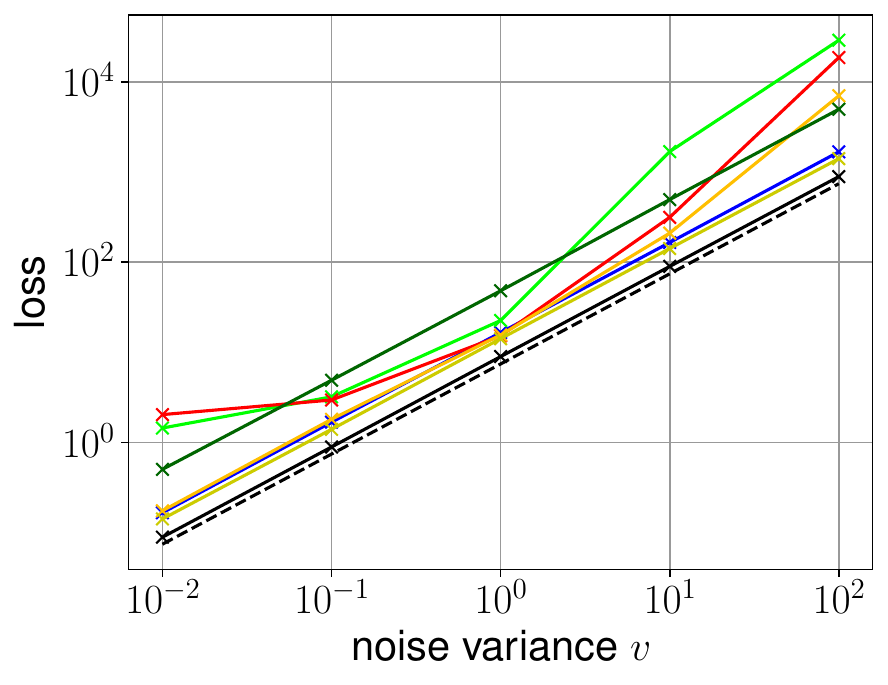}
\includegraphics[width=0.645\columnwidth]{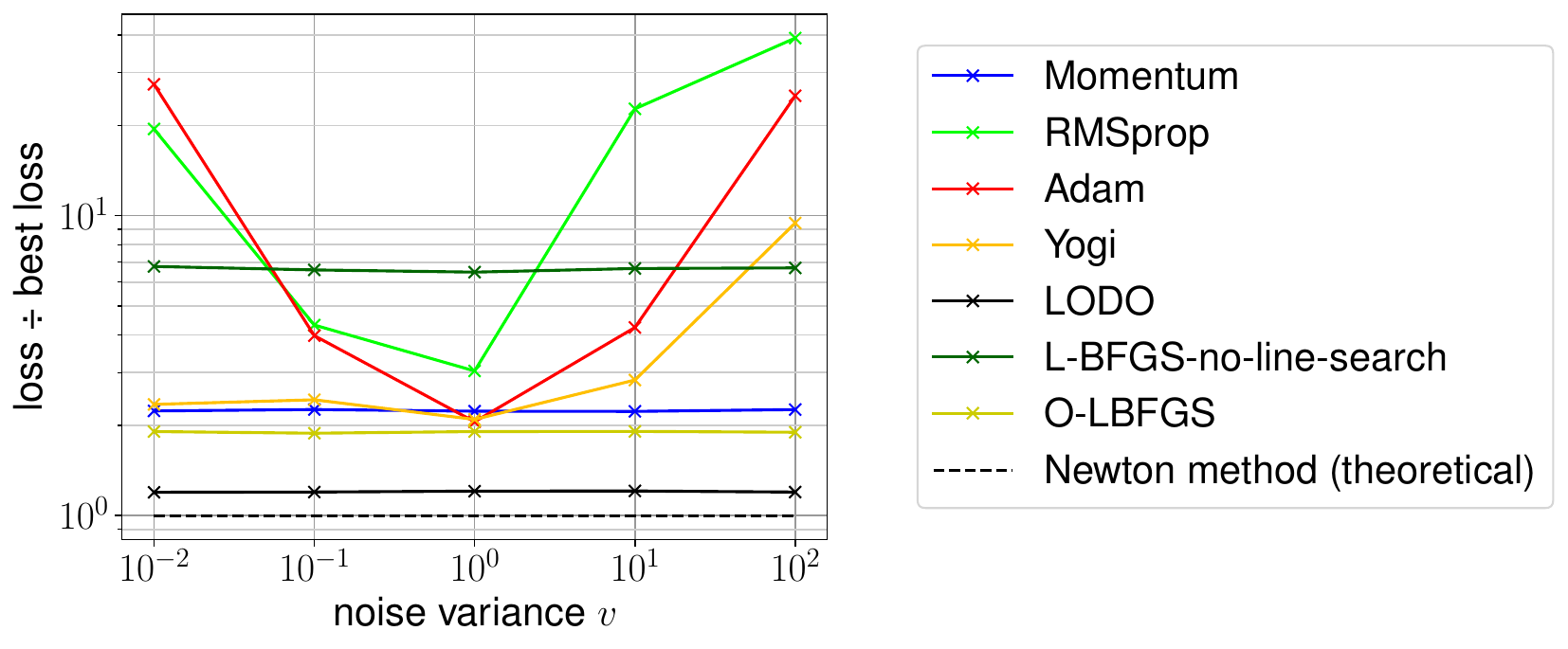}
\caption{Average tracking error of the quadratic bowl minimum on the noisy quadratic bowl task of Section \ref{sec:noisy_quadratic_bowl}, with various different noise variances $v$. Values are averaged over the last 10\% of training up to 300k steps. The theoretically best possible loss using Newton's method is also listed. The version of L-BFGS is one with stochastic modifications from \citep{pmlr-v2-schraudolph07a}, instead of the original from \citep{LBFGS}. \textbf{Left:} Tracking error on a log-log plot. \textbf{Right:} Ratio of tracking error to the theoretically best possible tracking error by Newton's method, also on a log-log plot. Methods which choose steps that are equivariant to gradient scaling are insensitive to noise magnitude and show up as horizontal lines.}
\label{fig:noise_covariance_test_relative}
\end{figure}


\subsection{CNN Image Generation Task Details} \label{sec:image_generation_details}
This section explains the CNN image generation task of Section \ref{sec:image_generation}, which is similar to the task of training a PixelCNN \citep{10.5555/3157382.3157633}. Like PixelCNN, our CNN generates pixels row by row and column by column, and classifies the brightness of each pixel into 256 classes with crossentropy loss. Our data preprocessing is as follows. An MNIST image randomly selected, and one pixel location is chosen uniformly at random and blackened. All pixels below, or to the right of and in the same row as the selected pixel are blackened. The input into the CNN consists of this partially masked/blackened image (divided by 256 for normalization), an image indicating which pixels are specifically masked/blackened (indicated by $-1$ and 1), another image indicating which pixels are left of the selected pixel (indicated by $-1$ and 1), an image of a linear gradient from $-1$ to $1$ in the x direction, and the same for the y direction. The last three images are present purely to break horizontal and vertical translation symmetries in the data, which has been shown to be helpful in vision tasks involving collection of information from specific locations of an image specified by the data \citep{DBLP:conf/nips/LiuLMSFSY18}. The preprocessed data is visualized in Figure \ref{fig:autoregression_data}.

\begin{figure}
    \centering
    \includegraphics[width=0.4\columnwidth]{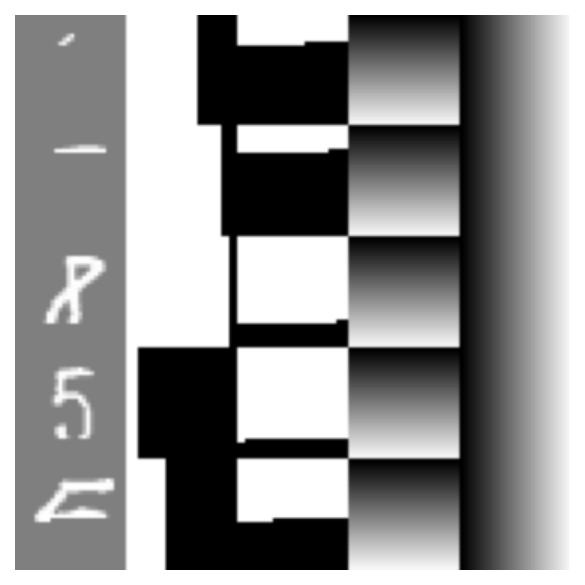}
    \caption{Batch of 5 tensors of preprocessed MNIST images from the image generation task of Section \ref{sec:image_generation}, each as one of the 5 rows of the image. Shown in each of 5 columns are the masked input, the left-of-selected-pixel indicator, visible mask, and two gradient images. The images in each row are concatenated together into a $28 \times 28 \times 5$ data tensor and then used as input into the CNN.}
    \label{fig:autoregression_data}
\end{figure}

The CNN architecture we use for autoregression consists of:
\begin{itemize}
    \item 5 input channels as previously described.
    \item Residual connection to Point A.
    \begin{itemize}
        \item One pixel of zero padding, and convolution with 3 by 3 filters to 20 channels, with no activation.
    \end{itemize}
    \item Point A.
    \item The following indented items are repeated 5 times.
    \begin{itemize}
        \item The following indented items are repeated 4 times.
        \begin{itemize}
            \item Residual connection to Point B.
            \begin{itemize}
                \item Convolution with 1 by 1 filters to 40 channels, with arctangent activation. We use the arctangent to mitigate gradient norm issues.
                \item One pixel of zero padding, and three different depthwise convolutions concatenated together, with 3 by 3 filters to 120 channels, with arctangent activation.
                \item Convolution with 1 by 1 filters to 20 channels, with no activation.
            \end{itemize}
            \item Point B.
        \end{itemize}
        \item Average pooling of 2 by 2 blocks to halve the image size, with a pixel of zero padding beforehand if the image size is odd.
    \end{itemize}
    \item Convolution with 1 by 1 filters to 256 channels, with softmax activation.
\end{itemize}
The loss is then the average crossentropy between true brightness of the query pixel and the distribution given by the softmax output of this CNN, over a batch size of 256 images. The whole CNN has about 94696 parameters, which are initialized with LeCun normal initialization \citep{10.5555/3294771.3294864}. The task for the optimizer is to train these parameters to minimize this loss.

Figure \ref{fig:autoregression_generations} shows some imitation MNIST images generated using this CNN by sampling pixels one by one, after training with various optimizers including ours. The generated images are lower in quality because training was limited to 300k steps to best compare the efficiency of the optimizers.

\begin{figure}[h!]
\centering
\subfigure[RMSprop]{
  \includegraphics[width=0.23\columnwidth]{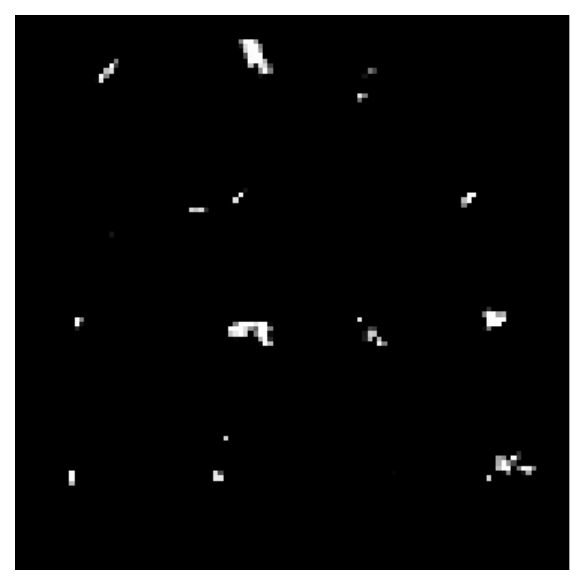}
}
\subfigure[Adam]{
  \includegraphics[width=0.23\columnwidth]{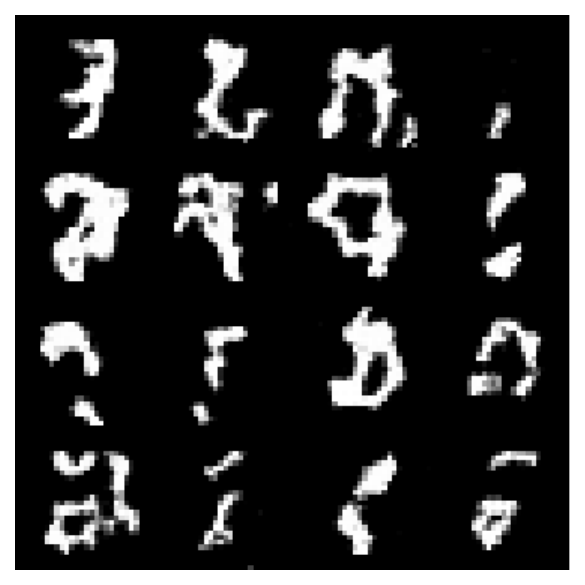}
}
\subfigure[Yogi]{
  \includegraphics[width=0.23\columnwidth]{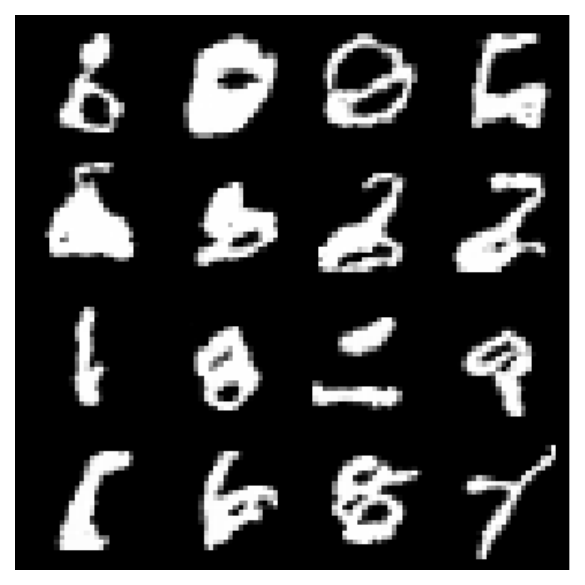}
}
\subfigure[LODO]{
  \includegraphics[width=0.23\columnwidth]{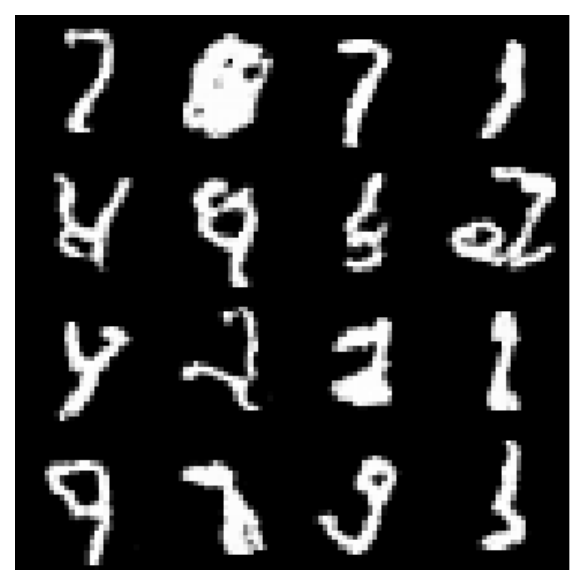}
}
\caption{Grids of 16 imitated MNIST images generated the image generation CNN of Section \ref{sec:image_generation}, trained for 300k steps using RMSprop, Adam, Yogi, and LODO.}
\label{fig:autoregression_generations}
\end{figure}

\begin{table}[h]
\centering
\caption{Mean losses between steps 180-200 while training on the Rosenbrock function minimization task, with various optimizers. \\}
\begin{tabular}{lc} \toprule
Optimizer & Mean loss between steps 180-200 \\ \midrule
Adam & 0.00005342  \\
RMSprop & 0.0008967 \\
Momentum & 0.01397 \\
Yogi & 0.0007916 \\ \midrule
LODO (ours) & 0.001040 $\pm$ 0.00002140 \\ \bottomrule
\end{tabular}
\label{table:rosenbrock_performance}
\end{table}

\begin{figure}[h!]
    \centering
    \includegraphics[width=0.6\columnwidth]{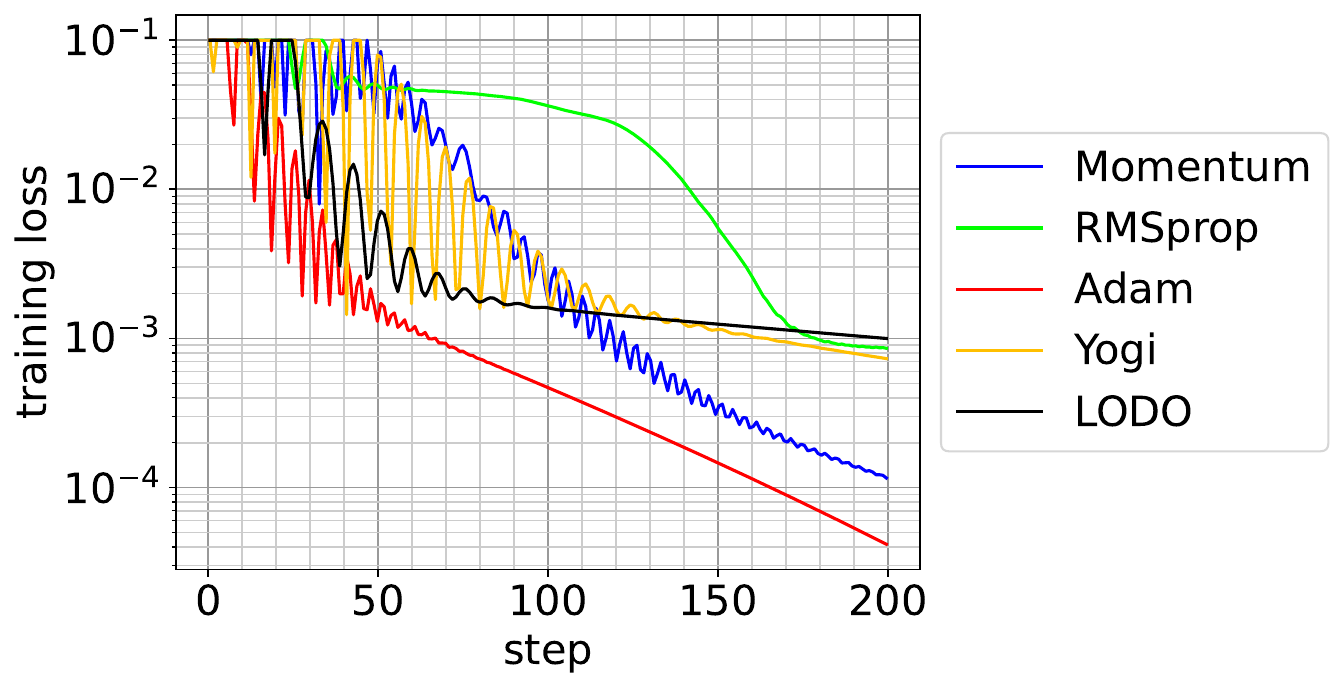}
    \caption{Log loss as a function of step, when using various optimizers on the Rosenbrock function minimization task.}
    \label{fig:rosenbrock_learning_curve}
\end{figure}

\section{Rosenbrock Function Minimization Experiment} \label{sec:rosenbrock_performance}

We probe the behavior of LODO with a small test task of finding the minimum of a rescaled Rosenbrock function $f(x,y)=0.01(x-1)^2 + (x^2-y)^2$, which has no local minima and one global minimum at $(x,y)=(1,1)$. We initialized the optimizers at $(x,y)=(-0.5, 2)$ and gave them 200 steps to run. The trajectory taken by LODO, shown in Figure \ref{fig:rosenbrock_trajectory}, is similar to the short timescale dynamics of other optimizers using momentum, in that it tends to overshoot and then correct itself in an oscillatory manner. Learning curves in Figure \ref{fig:rosenbrock_learning_curve} and losses in Table \ref{table:rosenbrock_performance} show the performance of all the optimizers on this task.

\section{Training Loop Timing Details} \label{sec:hardware}
This section describes how we timed each optimizer to report performance at specified times and step per second training speeds. We performed all optimization runs in TensorFlow 2, each with 40 Intel Xeon Gold 6248 CPUs and 2 Nvidia Volta V10 GPUs. Time reported includes all training time (forward and backward propagation, optimizer computation, data loading, preprocessing, etc.) except it does not include time taken to evaluate metrics such as the Hessian approximation error and the validation and test losses and accuracies.

\end{document}